\newcommand{\reals}{\mathbb{R}}
\newcommand{\tuple}[1]{\langle{#1}\rangle}
\DeclareMathOperator*{\argmax}{arg\,max}
\newcommand{\defeq}{\triangleq}
\DeclareRobustCommand\onedot{\futurelet\@let@token\@onedot}
\def\@onedot{\ifx\@let@token.\else.\null\fi\xspace}
\pgfplotsset{compat=1.10}
\newcommand{\percentiles}[7]{
\pgfplotstableread{#1}\datatable
  \addplot [name path=pluserror,draw=none,no markers,forget plot]
    table [x={#2},y expr=\thisrow{#5}] {\datatable};

  \addplot [name path=minuserror,draw=none,no markers,forget plot]
    table [x={#2},y expr=\thisrow{#3}] {\datatable};

  \addplot [forget plot,fill=#6,opacity=#7]
    fill between[on layer={},of=pluserror and minuserror];

  \addplot [#6,thick,no markers,line width=1.3pt]
    table [x={#2},y={#4}] {\datatable};
}
\newcommand{\entry}[2]
{\raisebox{3pt}{\tikz{\draw[#1,line width=1.3pt] (0,0) -- (0.6,0);}} #2}
\newtheorem{theorem}{Theorem}
\newtheorem{example}[theorem]{Example}
\newtheorem{definition}[theorem]{Definition}
\newcommand{\coffee}[0]{{\color{black}\Coffeecup}\xspace}
\definecolor{UofTBlue}{RGB}{0,47,101}
\newcommand{\buttonred}[0]{\color{red!70!white}\CIRCLE}
\newcommand{\buttongreen}[0]{\color{green!70!black}\CIRCLE}
\newcommand{\key}[0]{{\color{black}\faKey}\xspace}
\newcommand{\cookie}[0]{\begin{tikzpicture}
    \draw[draw=black,step=1cm,fill=brown,fill opacity=0.8] (0.5,0.5) circle (0.25);
    \draw[step=1cm,fill=black] (0.65,0.5) circle (0.03);
    \draw[step=1cm,fill=black] (0.45,0.65) circle (0.03);
    \draw[step=1cm,fill=black] (0.5,0.4) circle (0.03);
\end{tikzpicture}}
\newcommand{\cookiebutton}[0]{\begin{tikzpicture}
    \draw[draw=black,step=1cm,line width=0.5mm,fill=teal!50!white] (0.5,0.5) circle (0.25);
\end{tikzpicture}}
\newcommand{\haskey}{\resizebox{2.5mm}{!}{\begin{tikzpicture}\node[draw, thick, shape border rotate=90, isosceles triangle, isosceles triangle apex angle=60, fill=black, fill opacity=1.0, node distance=1cm,minimum height=1.5em] at (0,0) {};\end{tikzpicture}}\xspace}
\newcommand{\door}[0]{{\color{black}\faLock}\xspace}
\newcommand{\rzero}[0]{\includegraphics[width=7pt]{}\xspace}
\newcommand{\rone}[0]{\includegraphics[width=7pt]{}\xspace}
\newcommand{\rtwo}[0]{\includegraphics[width=7pt]{}\xspace}
\newcommand{\rthree}[0]{\includegraphics[width=7pt]{}\xspace}
\newcommand{\textcookie}[0]{\includegraphics[width=7pt]{}\xspace}
\newcommand{\textcookiebutton}[0]{\includegraphics[width=7pt]{}\xspace}
\newcommand{\textcookieeaten}[0]{\includegraphics[width=7pt]{}\xspace}
\renewcommand{\todo}[2][]{\tikzexternaldisable\@todo[#1]{#2}\tikzexternalenable}
\begin{document}

\begin{frontmatter}

 \title{Learning Reward Machines: A Study in Partially\\Observable Reinforcement Learning\tnoteref{t1}}
 \tnotetext[t1]{This work is an extended version of our previous NeurIPS19 publication \citep{tor-etal-neurips19}.}
 
\author[1,3]{Rodrigo Toro Icarte}
\ead{rntoro@uc.cl}

\author[2]{Ethan Waldie}

\author[2,3]{Toryn Q. Klassen}

\author[4]{Richard Valenzano}

\author[1]{Margarita P. Castro}

\author[2,3]{Sheila A. McIlraith}

\address[1]{Pontificia Universidad Católica de Chile, Vicuña Mackenna 4860, Macul, RM, Chile}
\address[2]{University of Toronto, 214 College St, Toronto, ON, Canada}
\address[3]{Vector Institute, 661 University, Toronto, ON, Canada}
\address[4]{Ryerson University, 350 Victoria St, Toronto, ON, Canada}

\begin{abstract}
  Reinforcement learning (RL) is a central problem in artificial intelligence. This problem consists of defining artificial agents that can learn optimal behaviour by interacting with an environment -- where the optimal behaviour is defined with respect to a reward signal that the agent seeks to maximize. Reward machines (RMs) provide a structured, automata-based representation of a reward function that enables an RL agent to decompose an RL problem into structured subproblems that can be efficiently learned via off-policy learning. Here we show that RMs can be learned from experience, instead of being specified by the user, and that the resulting problem decomposition can be used to effectively solve partially observable RL problems. We pose the task of learning RMs as a discrete optimization problem where the objective is to find an RM that decomposes the problem into a set of subproblems such that the combination of their optimal memoryless policies is an optimal policy for the original problem. We show the effectiveness of this approach on three partially observable domains, where it significantly outperforms A3C, PPO, and ACER, and discuss its advantages, limitations, and broader potential.\footnote{Our code is available at \url{https://bitbucket.org/RToroIcarte/lrm}.}
\end{abstract}

\begin{keyword}
  reinforcement learning \sep reward machines \sep partial observability \sep automata learning \sep abstractions \sep non-Markovian environments
\end{keyword}

\end{frontmatter}

\section{Introduction}
\label{sec:intro}

A fundamental component of human intelligence is our ability to make decisions -- to decide how to act. Indeed, decision making is essential not only to individuals, but to companies, to governments, and to computer-controlled systems that ensure the safe and effective operation of much of our modern infrastructure. Unfortunately, making good decisions can be hard. It can depend on complex inter-relationships between diverse factors, not all of them observable, or well understood. \emph{Reinforcement learning (RL)} endeavours to solve sequential decision-making problems using minimal supervision and minimal prior knowledge. Its goal is to define artificial agents that learn optimal behaviour by interacting with an environment, which may take the form of a simulator or the real world \citep{sutton2018reinforcement}. Every interaction with the environment delivers a reward signal. An RL agent seeks to learn a \emph{policy} (a mapping from observations to actions) that maximizes its expected cumulative reward, improving its policy over time by learning from its past experiences. 

The use of neural networks for function approximation has led to many recent advances in RL. Such deep RL 
methods have allowed agents to learn effective policies in many complex environment including board games \citep{silver2017mastering}, video games \citep{mnih2015human}, and robotic systems \citep{andrychowicz2018learning}. However, RL methods (including deep RL methods) often struggle when the environment is \emph{partially observable}. Indeed, partial observability is one of the main challenges towards applying RL in real-world problems \citep{dulac2019challenges,dulac2021challenges}. This is because agents in such environments usually require some form of memory to learn optimal behaviour \citep{singh1994learning}. Recent approaches for giving memory to an RL agent either rely on recurrent neural networks \citep[e.g.,][]{hausknecht2015deep,mnih2016asynchronous,jaderberg2016reinforcement,wang2016sample,schulman2017proximal}, memory-augmented neural networks \citep[e.g.,][]{oh2016control,khan2017memory,hung2018optimizing}, or external memories that the agent can control using primitive actions \citep[e.g.,][]{littman1993optimization,peshkin1999learning,zhang2016learning,icarte2020act}.

In this work, we show that \emph{reward machines (RMs)} are another useful tool for providing memory in a partially observable environment. RMs were originally conceived to provide a structured, automata-based representation of a reward function \citep{icml2018rms,icarte2020reward,camamacho2019ijcai,DeGiacomo2020transducers}. Exposed structure can be exploited by the \emph{Q-learning for reward machines (QRM)} algorithm \citep{icml2018rms}, which simultaneously learns a separate policy for each state in the RM. QRM has been shown to outperform standard and hierarchical deep RL over a variety of discrete and continuous domains. However, QRM was only defined for fully observable environments. Furthermore, the RMs were handcrafted by a user and then given to the RL agent, thus allowing the agent to exploit the exposed problem substructure. Here, we propose a method for learning an RM directly from experience in a partially observable environment, in a manner that allows the RM to serve as memory for an RL algorithm. 

There are three main contributions of this work. The first is to propose a discrete optimization problem for learning reward machines from experience in a partially observable environment, where the objective is to find a reward machine that makes the problem \emph{as Markovian as possible}. A requirement is that the RM learning method be given a finite set of detectors for properties that serve as the vocabulary for the RM. The model is also fed with traces collected by the agent while exploring the environment. Then, the optimization problem's objective function ranks the reward machines according to how well they predict future observations given the current RM state. 

Our second contribution is to study different methodologies to solve the resulting optimization problem for learning reward machines. In particular, we propose a \emph{mixed integer linear programming (MILP)} model, a \emph{constrained programming (CP)} model, and two \emph{local search (LS)} methods. In our experiments, the best performance was obtained by the local search methods.

Finally, we show how to integrate our models for learning reward machines into the agent-environment interaction loop and show the effectiveness of doing so. By simultaneously learning an RM and a policy for the environment, we are able to significantly outperform several deep RL baselines that use recurrent neural networks as memory in three partially observable domains. We also extend the RM-tailored algorithm \emph{Q-learning for reward machines (QRM)} to the case of partial observability where we see further gains when combined with our RM learning method.

This paper builds upon \cite{tor-etal-neurips19} -- where we originally proposed to formulate the problem of learning an RM as a discrete optimization problem and solved it using tabu search. In this work, we provide further details about this learning pipeline and propose three novel formulations to learn reward machines. These new formulations include a MILP, CP, and LS model. We compare the performance of these models relative to tabu search and found that a local search approach with restarts is consistently better at finding high-quality RMs than tabu search. As a result, the performance of our method improves considerably with respect to the performance reported in our previous publication.

\section{Preliminaries}
\label{sec:preliminaries}
RL agents learn policies from experience. When the problem is fully-observable, the underlying environment model is typically assumed to be a \emph{Markov decision process (MDP)}. An MDP is a tuple $\mathcal{M} = \tuple{S,A,r,p,\gamma,\mu}$, where $S$ is a finite set of \emph{states}, $A$ is a finite set of \emph{actions}, $r: S \times A \rightarrow \reals$ is the \emph{reward function}, $p(s,a,s')$ is the \emph{transition probability distribution}, $\gamma$ is the \emph{discount factor}, and $\mu$ is the \emph{initial state distribution} where $\mu(s_0)$ is the probability that the agent starts in state $s_0 \in S$. In addition, a subset of the states might be labelled as \emph{terminal states}.

At the beginning of an \emph{episode}, the environment is set to an initial state $s_0$, sampled from $\mu$. Then, at time step $t$, the agent observes the current state $s_t \in S$ and executes an action $a_t \in A$. In response, the environment returns the next state $s_{t+1} \sim p(\cdot|s_t,a_t)$ and the immediate reward $r_{t+1} = r(s_t,a_t,s_{t+1})$. The process then repeats from $s_{t+1}$ until potentially reaching a terminal state, when a new episode will begin. 

The agent's goal is to collect as much reward from the environment as possible. To do so, it learns a \emph{policy} $\pi(a|s)$, which is a probability distribution over the actions $a \in A$ given a state $s \in S$. As the agent interacts with the environment, it also improves its policy until (ideally) finding an \emph{optimal policy} $\pi^*$. An optimal policy is a policy that maximizes the expected return received by the agent, which is formally defined as follows:
\begin{equation}
    \pi^* = \argmax_{\pi} \sum_{s \in S} \mu(s) \mathbb{E}_{\pi}\left[\sum_{t=0}^{\infty}\gamma^t r_t \middle| s_0 = s\right]
\end{equation}

\emph{Q-learning} \citep{watkins1992q} is a well-known RL algorithm that uses samples of experience of the form $(s_t,a_t,r_t,s_{t+1})$ to estimate the optimal Q-function $q^*(s,a)$. Here, $q^*(s, a)$ is the expected return of selecting action $a$ in state $s$ and following an optimal policy $\pi^*$ thereafter. During execution, Q-learning maintains the current estimate of the optimal Q-function (i.e., \emph{Q-value}) for each state $s$ and action $a$, $Q(s,a)$. Given a sampled experience $(s_t,a_t,r_t,s_{t+1})$, where $s_{t+1}$ is the state reached after executing action $a_t$ in state $s_t$ and receiving a reward $r_t$, Q-learning updates $Q(s_t,a_t)$ towards $r_t + \gamma \max_{a' \in A}{Q(s_{t+1},a')}$. Given enough experience, the Q-value estimates will converge to the optimal Q-function, and so an optimal policy $\pi^*$ can be computed by always selecting the action $a$ with the highest value of $Q(s,a)$ for each state $s\in S$.

Unfortunately, Q-learning is impractical when solving problems with large state spaces. In such cases, \emph{function approximation} methods are often used. Instead of storing a Q-value for each state-action pair in a table, deep RL methods like DQN \citep{mnih2015human} and DDQN \citep{van2016deep} represent the Q-function as $Q_\theta(s,a)$, where $Q_\theta$ is a neural network whose inputs are features of the state and the outputs are the Q-value estimates for each action $a \in A$. To train the network, mini-batches of experiences $(s,a,r,s')$ are randomly sampled from an \emph{experience replay} buffer and used to minimize the Bellman error. In the case of DQN, this is accomplished by minimizing the square error between $Q_\theta(s,a)$ and the Bellman estimate $r + \gamma \max_{a'}{Q_{\theta'}(s',a')}$. Note that the updates are made with respect to a \emph{target network} with parameters $\theta'$. The parameters $\theta'$ are held fixed when minimizing the square error, but updated to $\theta$ after a certain number of training updates. The role of the target network is to stabilize learning. DDQN follows a similar approach, but the Bellman estimate is computed by selecting the next action $a'$ using $Q_{\theta}$ instead of the target network. This is, $r+\gamma Q_{\theta'}(s',\argmax_{a'}Q_{\theta}(s',a'))$. 

In partially observable problems, the underlying environment model is typically assumed to be a \emph{Partially Observable Markov Decision Process (POMDP)}. A POMDP is a tuple $\mathcal{P_O} = \tuple{S,O,A,r,p,\omega,\gamma,\mu}$, where $S$, $A$, $r$, $p$, $\gamma$, and $\mu$ are defined as in an MDP, $O$ is a finite set of \emph{observations}, and $\omega(o|s)$ is the \emph{observation probability distribution}. Interacting with a POMDP is similar to interacting with an MDP. The environment starts from a sampled initial state $s_0 \sim \mu$. At time step $t$, the agent is in state $s_t \in S$, executes an action $a_t \in A$, receives an immediate reward $r_{t+1}=r(s_t,a_t,s_{t+1})$, and moves to $s_{t+1}$ according to $p(s_{t+1}|s_t,a_t)$. However, the agent does not observe $s_t$ directly. Instead, the agent observes $o_t \in O$, which is linked to $s_t$ via $\omega$, where $\omega(o_{t}|s_{t})$ is the probability of observing $o_{t}$ from state $s_{t}$ \citep{cassandra1994acting}.

RL methods cannot be immediately applied to POMDPs because the transition probabilities and reward function are not necessarily Markovian w.r.t.\ $O$ (though by definition they are w.r.t. $S$). As such, optimal policies may need to consider the complete history $(o_0, a_0, \dots, a_{t-1}, o_t)$ of observations and actions when selecting the next action.\footnote{Technically, the history of interactions should also include the immediate rewards \citep{izadi2005using}, this is $h_t = (o_0, a_0, r_1, \dots, a_{t-1}, r_t, o_t)$. However, we can remove the immediate rewards from the history without loss of generality because it is always possible to include the immediate reward $r_t$ as part of the observation $o_t$.} Several partially observable RL methods use a recurrent neural network to compactly represent the history, and then use a policy gradient method to train it. However, when we do have access to a full POMDP model $\mathcal{P_O}$, then the history can be summarized into a \emph{belief state}. A belief state is a probability distribution $b_t:S \rightarrow [0,1]$ over $S$, such that $b_t(s)$ is the probability that the agent is in state $s \in S$ given the history up to time $t$. The initial belief state is computed using the initial observation $o_0$: $b_0(s) \propto \omega(s,o_0)$ for all $s \in S$. The belief state $b_{t+1}$ is then determined from the previous belief state $b_t$, the executed action $a_t$, and the resulting observation $o_{t+1}$ as follows: 
\begin{equation}
b_{t+1}(s') \propto\omega(s',o_{t+1}) \sum_{s \in S}{p(s,a_t,s')b_t(s)} \ \ \ \text{for all $s' \in S$.}
\label{eq:belief}
\end{equation}

Since the state transitions and reward function are Markovian w.r.t.\ $b_t$, the set of all belief states $B$ can be used to construct the belief MDP $\mathcal{M}_B$, where the states of $\mathcal{M}_B$ are $B$, $A$ are the  actions, the transition probabilities are computed using equation \eqref{eq:belief}, and the reward function $r_b$ is as follows:
\begin{equation}
r_b(b,a) = \sum_{s \in S}{r(s,a)b(s)}.
\label{eq:reward_belief}
\end{equation}
Any optimal policies for $\mathcal{M}_B$ is also optimal for the POMDP \citep{cassandra1994acting}.

\section{Reward Machines for Partially Observable Environments}
\label{sec:RMsPO}

In this section, we define RMs for the case of partial observability. We use the following problem as a running example to help explain various concepts.

\begin{example}[The cookie domain]
The \emph{cookie domain}, shown in Figure~\ref{fig:d_cookie}, has three rooms connected by a hallway. The agent (purple triangle) can move in the four cardinal directions. There is a button in the orange room that, when pressed, causes a cookie to randomly appear in the green or blue room. The agent receives a reward of $+1$ for reaching (and  thus eating) the cookie and may then go and press the button again. Pressing the button before reaching a cookie will remove the existing cookie and cause a new cookie to randomly appear. There is no cookie at the beginning of the episode. This domain is partially observable since the agent can only see what it is in the room that it currently occupies, as shown in Figure~\ref{fig:d_view}.
\end{example}

\begin{figure}
    \centering
    \begin{subfigure}[b]{.49\textwidth}
        \centering
        \includegraphics[width=0.75\textwidth]{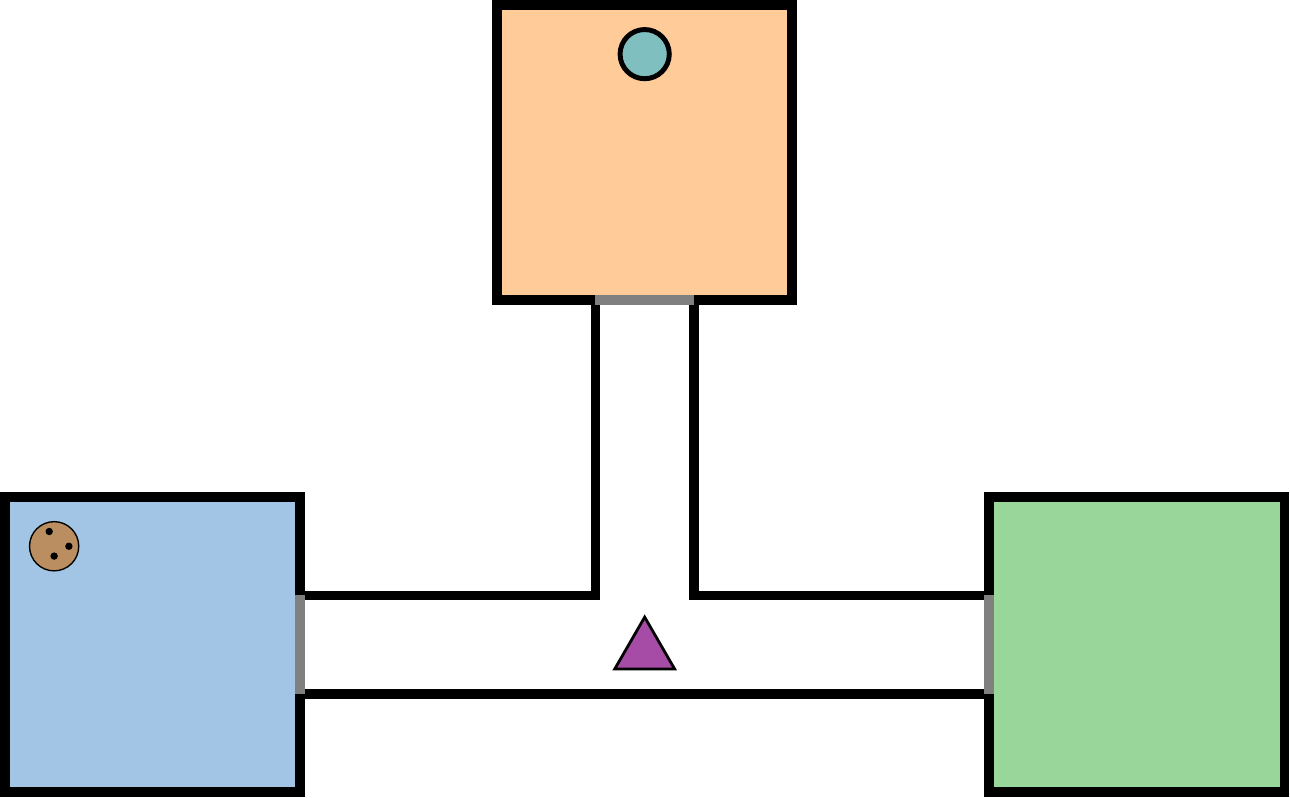}
        \subcaption{Cookie domain.}
        \label{fig:d_cookie}
    \end{subfigure}
    \begin{subfigure}[b]{.49\textwidth}
        \centering
        \includegraphics[width=0.75\textwidth]{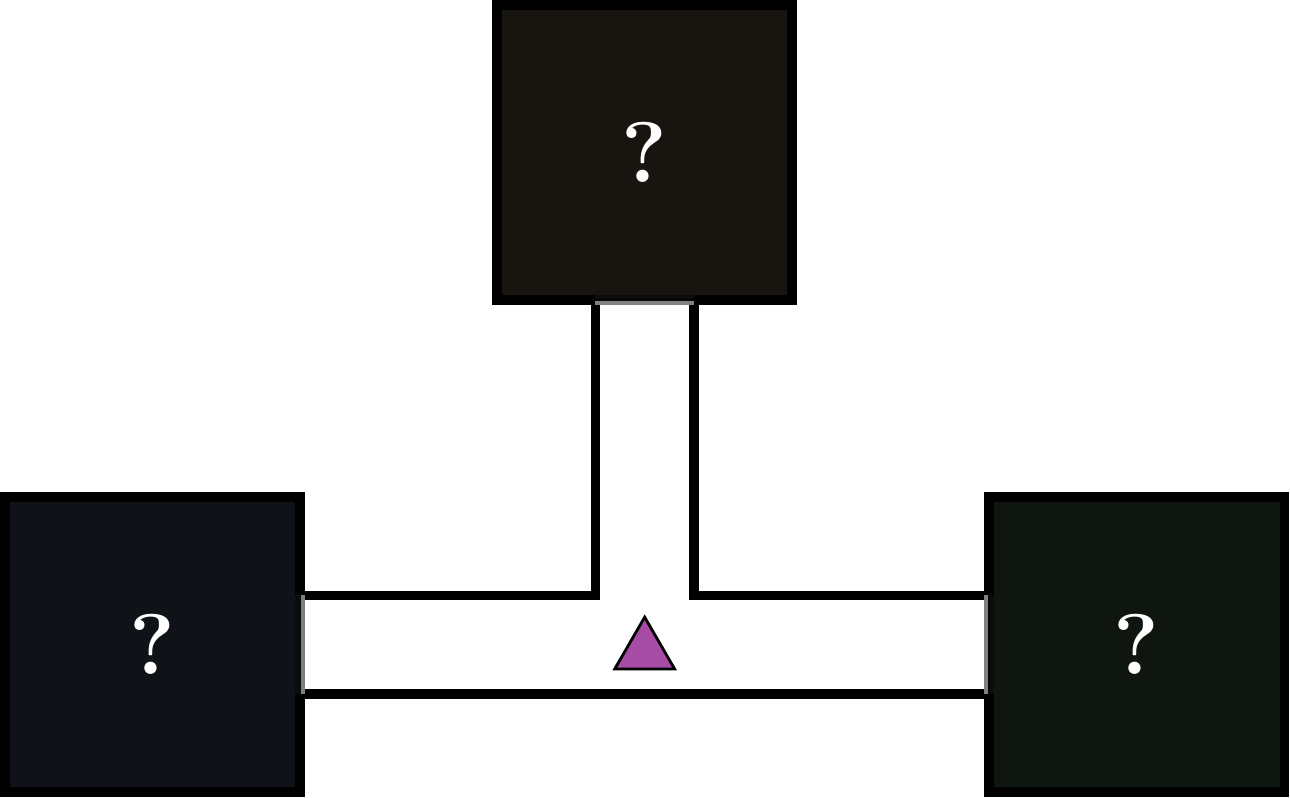}
        \subcaption{Agent's view.}
        \label{fig:d_view}
    \end{subfigure}
    \caption{In the cookie domain, the agent
can only see what is in the current room.}
    \label{fig:domains}
\end{figure}

RMs are finite state machines that are used to encode a reward function \citep{icml2018rms,icarte2020reward}.
They are defined over a set of propositional symbols $\mathcal{P}$ that correspond to a set of high-level features that the agent can detect using a \emph{labelling function} $L:O_{\emptyset} \times A_{\emptyset} \times O \to 2^\mathcal{P}$ where (for any set $X$) $X_{\emptyset} \defeq X \cup \{\emptyset\}$. $L$ assigns truth values to symbols in $\mathcal{P}$ given an environment experience $e = (o,a,o')$ where $o'$ is the observation seen after executing action $a$ when observing $o$. We use $L(\emptyset,\emptyset,o)$ to assign truth values to the initial observation. We call a truth value assignment of $\mathcal{P}$ a \emph{high-level observation} because it provides a high-level view of the low-level environment observations via the labelling function $L$. A formal definition of an RM follows:
\begin{definition}[reward machine]
Given a set of propositional symbols $\mathcal{P}$, a reward machine is a tuple $\mathcal{R}_{\mathcal{P}}=\tuple{U,u_0,\delta_u,\delta_r}$ where $U$ is a finite set of states, $u_0 \in U$ is an initial state, $\delta_u$ is the state-transition function, $\delta_u: U \times 2^\mathcal{P} \to U$, and $\delta_r$ is the reward-transition function, $\delta_r: U \times 2^\mathcal{P} \to \reals$.
\end{definition}

One way of thinking about RMs is that they decompose problems into a set of high-level states $U$ and define transitions using if-like conditions defined by $\delta_u$. These conditions are over a set of binary properties $\mathcal{P}$ that the agent can detect using $L$. For example, in the cookie domain, $\mathcal{P} = \{\textcookie, \textcookieeaten, \textcookiebutton, \rzero, \rone, \rtwo, \rthree\}$. These properties are true in the following situations: $\rzero,\rone,\rtwo$, or $\rthree$ is true if the agent is in a room of that color; $\textcookie$ is true if the agent is in the same room as a cookie; $\textcookiebutton$ is true if the agent pushed the button with its last action; and $\textcookieeaten$ is true if the agent ate a cookie with its last action.

Figure~\ref{fig:rms} shows three possible reward machines for the cookie domain. We note that these three machines define the same reward signal, 1 for eating a cookie and 0 otherwise, but differ in their states and transitions. As a result, they differ with respect to the amount of information about the current POMDP state that can be inferred from the RM state, as we will see below.

Each RM starts in the initial state $u_0$. Edge labels in the figures provide a visual representation of the functions $\delta_u$ and $\delta_r$. For example, label $\tuple{\text{\rzero\textcookieeaten},1}$ between state $u_2$ and $u_0$ in Figure \ref{fig:rm_optimal} represents $\delta_u(u_2, \lbrace \rzero,\textcookieeaten\rbrace)=u_0$ and $\delta_r(u_2, \lbrace \rzero,\textcookieeaten\rbrace)=1$. Intuitively, this means that if the RM is in state $u_2$ and the agent just ate a cookie $\textcookieeaten$ in room $\rzero$, then the agent will receive a reward of $1$ and the RM will transition to $u_0$. Notice that any properties not listed in the label are false (e.g. $\textcookie$ must be false to take the transition labelled $\tuple{\text{\rzero\textcookieeaten},1}$). We also use multiple labels separated by a semicolon (e.g., ``$\tuple{\text{\rtwo},0};\tuple{\text{\rzero\textcookie},0}$'') to describe different conditions for transitioning between the RM states, each with their own associated reward. The label $\tuple{\text{o/w},r}$ (``o/w'' for ``otherwise'') on an edge from $u_i$ to $u_j$ means that transition will be made (and reward $r$ received) if none of the other transitions from $u_i$ can be taken.

Let us illustrate the behaviour of an RM using the one shown in Figure~\ref{fig:rm_perfect}. The RM will stay in $u_0$ until the agent presses the button (causing a cookie to appear), whereupon the RM moves to $u_1$. From $u_1$ the RM may move to $u_2$ or $u_3$ depending on whether the agent finds a cookie when it enters another room. Finally, the RM moves back to $u_0$ from $u_2$ (or $u_3$) when the agent eats a cookie. Note that it is possible to associate meaning with being in RM states. In the example, $u_0$ means that there is no cookie available, $u_1$ means that there is a cookie in some room (either blue or green), $u_2$ means that the cookie is in the green room, and $u_3$ means that the cookie is in the blue room.

When learning a policy for a given RM, one simple technique is to learn a policy $\pi(a|o,u)$ that considers the current observation $o \in O$ and the current RM state $u \in U$ to select action $a \in A$. Interestingly, a partially observable problem might be non-Markovian over $O$, but Markovian over $O \times U$ for some RM $\mathcal{R}_{\mathcal{P}}$. This is the case for the cookie domain with the RM from Figure~\ref{fig:rm_perfect}, for example.

\emph{Q-learning for RMs (QRM)} is another way to learn a policy by exploiting a given RM \citep{icml2018rms}. QRM learns one Q-function $Q_u$ (i.e., policy) per RM state $u \in U$. Then, given any sample experience, the RM can be used to emulate how much reward would have been received had the RM been in any one of its states. Formally, experience $e=(o,a,o')$ can be transformed into a valid experience $(\tuple{o, u}, a, \tuple{o', u'}, r)$ used for updating $Q_u$ for each $u\in U$, where $u' = \delta_u(u, L(e))$ and $r = \delta_r(u, L(e))$. Hence, any off-policy learning method can take advantage of these ``synthetically" generated experiences to update all subpolicies simultaneously. 

When tabular Q-learning is used, QRM is guaranteed to converge to an optimal policy on fully-observable problems \citep{icml2018rms}. 
However, in a partially observable environment, an experience $e$ might be more or less likely depending on the RM state that the agent was in when the experience was collected. For example, experience $e$ might be possible in one RM state $u_i$ but not in RM state $u_j$. Thus, updating the policy for $u_j$ using $e$ as QRM does, would introduce an unwanted bias to $Q_{u_j}$. We will discuss how to (partially) address this problem in Section~\ref{sec:main_loop}.

\section{Learning Reward Machines from Traces}
\label{sec:approach}

To learn RMs, our overall idea is to search for an RM that can be used as external memory by an agent for solving a partially-observable problem. As input, our method will take a set of high-level propositional symbols $\mathcal{P}$ and a labelling function $L$ that can detect them. Then, the key question is what properties should such an RM have.

\begin{figure}
    \centering
    \begin{subfigure}[b]{.13\textwidth}
        \centering
        \includegraphics[]{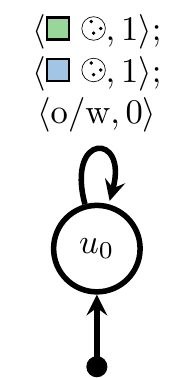}
        \subcaption{Naive RM.}
        \label{fig:rm_naive}
    \end{subfigure}
    \begin{subfigure}[b]{.36\textwidth}
        \centering
        \includegraphics[]{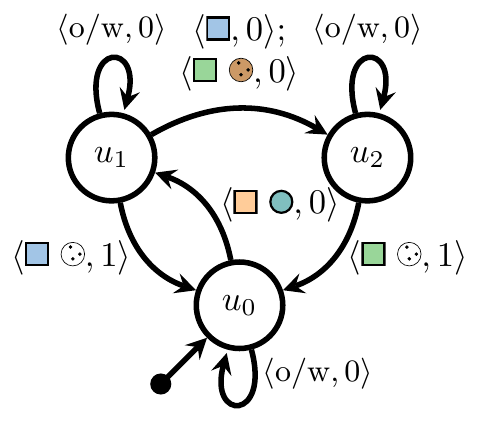}
        \subcaption{``Optimal'' RM.}
        \label{fig:rm_optimal}
    \end{subfigure}
    \begin{subfigure}[b]{.49\textwidth}
        \centering
        \includegraphics[]{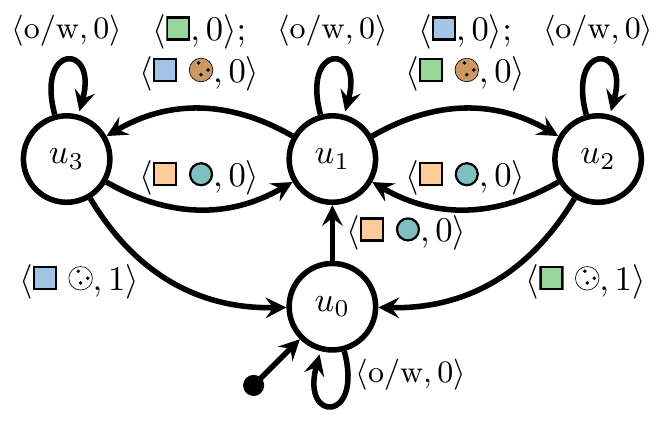}
        \subcaption{Perfect RM.}
        \label{fig:rm_perfect}
    \end{subfigure}
    \caption{Three possible Reward Machines for the Cookie domain.}
    \label{fig:rms}
\end{figure}

Three proposals naturally emerge from the literature. The first comes from the work on learning \emph{finite state machines (FSMs)}  \citep{angluin1983inductive,zeng1993learning,shvo2021interpretable}, which suggests learning the smallest RM that correctly mimics the external reward signal given by the environment, as in \citeauthor{giantamidis2016learning}' method for learning Moore machines \citep{giantamidis2016learning}. Unfortunately, such approaches would learn RMs of limited utility, like the one in Figure~\ref{fig:rm_naive}. This naive RM correctly predicts reward in the cookie domain (i.e., $+1$ for eating a cookie \textcookieeaten, zero otherwise) but provides no memory in support of solving the task.

The second proposal comes from the literature on learning \emph{finite state controllers (FSC)} \citep{meuleau1999learning} and on \emph{model-free RL} methods \citep{sutton2018reinforcement}. This work suggests looking for the RM whose optimal policy receives the most reward. For instance, the RM from Figure~\ref{fig:rm_optimal} is ``optimal'' in this sense. It decomposes the problem into three states. The optimal policy for $u_0$ goes directly to press the button, the optimal policy for $u_1$ goes to the blue room and eats the cookie if present, and the optimal policy for $u_2$ goes to the green room and eats the cookie. Together, these three policies give rise to an optimal policy for the complete problem. This is a desirable property for RMs, but requires computing optimal policies in order to compare the relative quality of RMs, which seems prohibitively expensive. However, we believe that finding ways to efficiently learn ``optimal'' RMs is a promising future work direction.

Finally, the third proposal comes from the literature on \emph{predictive state representations (PSRs)} \citep{littman2002predictive}, \emph{deterministic Markov models (DMMs)} \citep{mahmud2010constructing}, and \emph{model-based RL} \citep{kaelbling1996reinforcement}. This line of research suggests learning the RM that remembers sufficient information about the history to make accurate Markovian predictions about the next observation. For instance, the cookie domain RM shown in Figure~\ref{fig:rm_perfect} is \emph{perfect} w.r.t. this criterion. Intuitively, every transition in the cookie environment is already Markovian except for transitioning from one room to another. Depending on different factors, when entering into the green room there could be a cookie there (or not). The perfect RM is able to encode such information using 4 states: when at $u_0$ the agent knows that there is no cookie, at $u_1$ the agent knows that there is a cookie in the blue or the green room, at $u_2$ the agent knows that there is a cookie in the green room, and at $u_3$ the agent knows that there is a cookie in the blue room. Since keeping track of more information will not result in better predictions, this RM is \emph{perfect}. Below, we develop a theory about perfect RMs and describe an approach for learning them from experience.

\subsection{Perfect Reward Machines: Formal Definition and Properties}

The key insight behind perfect RMs is to use their states $U$ and transitions $\delta_u$ to keep track of relevant past information such that the partially observable environment $\mathcal{P_O}$ becomes Markovian with respect to $O \times U$. This is ensured by the following definition.

\begin{definition}[perfect reward machine]
\label{def:perfect_rm}
An RM $\mathcal{R}_{\mathcal{P}}=\tuple{U,u_0,\delta_u,\delta_r}$ is considered perfect for a POMDP $\mathcal{P_O} = \tuple{S,O,A,r,p,\omega,\gamma,\mu}$ w.r.t.\ a labelling function $L$ iff for every trace $(o_0,a_0,\ldots,o_t,a_t)$ generated by any policy over $\mathcal{P_O}$, the following holds:
$$
\Pr(o_{t+1},r_{t}|o_0,a_0,\ldots,o_t,a_t) = \Pr(o_{t+1},r_{t}|o_t,x_t,a_t),
$$
where $x_0 = u_0$ and $x_t = \delta_u(x_{t-1}, L(o_{t-1},a_{t-1},o_t))$.\footnote{Note that $x_t$ is the RM state that the agent is in at time $t$. We will make use of this notation below.}
\end{definition}

Two important properties follow from Definition~\ref{def:perfect_rm}. First, if the set of reachable belief states $B$ for the POMDP $\mathcal{P_O}$ is finite, then there exists a perfect RM for $\mathcal{P_O}$. Recall that a belief state models the probability of being at any POMDP state given the previous interactions with the environment. When the model of $\mathcal{P_O}$ is known, we can compute the current belief state using the Bayes rule, as discussed in Section~\ref{sec:preliminaries}. This first property states that if there is a finite set of belief states reachable from the initial state, then there exists a perfect RM for that environment.

\begin{theorem}
\label{theo:rm_exists}
For any POMDP $\mathcal{P_O}$ with a finite reachable belief space, there exists a perfect RM for $\mathcal{P_O}$.
\end{theorem}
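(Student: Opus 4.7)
The plan is to construct an explicit perfect RM whose non-initial states are the reachable belief states of $\mathcal{P_O}$, exploiting the classical fact that the belief state is a sufficient statistic for the history. First, I would introduce a propositional vocabulary $\mathcal{P}$ and labelling function $L$ rich enough that each experience triple $(o, a, o')$ can be recovered from its label -- for instance, take $\mathcal{P} = \{p_{(o,a,o')} : (o,a,o') \in O \times A \times O\}$ and set $L(o, a, o') = \{p_{(o,a,o')}\}$, so that labels are in bijection with triples. Since we only need to exhibit \emph{some} $\mathcal{P}$ and $L$ witnessing existence, we are free to make this choice.

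Next I would define the RM itself. Let $B$ be the (finite, by hypothesis) set of reachable belief states, put $U = B \cup \{u_*\}$ for a fresh initial state $u_*$, and set $u_0 = u_*$. Define $\delta_u$ by cases: from $u_*$, upon reading the singleton label encoding $(o_0, a_0, o_1)$, transition to the belief $b_1$ obtained by taking the prior $b_0$ induced by $o_0$ and $\mu$ and performing one update by $(a_0, o_1)$ via equation~\eqref{eq:belief}; from any $b \in B$, upon reading the label encoding $(o, a, o')$, transition to the belief update of $b$ by $(a, o')$, ignoring the redundant $o$. Transitions on inconsistent or unreachable labels can be set arbitrarily, as can $\delta_r$, since the perfection condition constrains only the RM-state trajectory.

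The core of the argument would then be an induction on $t$ showing that, for any trace $(o_0, a_0, \ldots, o_t, a_t)$ with $t \geq 1$, the state $x_t$ computed via Definition~\ref{def:perfect_rm} equals the true belief state $b_t$. The base case $t = 1$ holds by construction of $\delta_u(u_*, \cdot)$, and the inductive step follows because the injective labels let $\delta_u$ implement the belief update \eqref{eq:belief} on $B$. Given the invariant, the classical POMDP identity $\Pr(o_{t+1}, r_t \mid o_0, a_0, \ldots, o_t, a_t) = \Pr(o_{t+1}, r_t \mid b_t, a_t)$ immediately yields $\Pr(o_{t+1}, r_t \mid o_t, x_t, a_t) = \Pr(o_{t+1}, r_t \mid b_t, a_t)$ for $t \geq 1$, and the case $t = 0$ is trivial because $x_0 = u_*$ is constant while $o_0$ already appears on the right-hand side of the perfection condition (so $b_0$ can be reconstructed from $o_0$ and $\mu$).

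The main obstacle I anticipate is bridging the mismatch between RM transitions, which fire on labels in $2^{\mathcal{P}}$, and belief updates, which depend on the underlying action--observation pair; making $L$ fine enough to inject triples into labels sidesteps this at the cost of a possibly large $\mathcal{P}$. A secondary subtlety is the $t = 0$ boundary, where $b_0$ depends on $o_0$ but $u_0$ must be fixed; this is reconciled only because the perfection condition conditions on $o_t$ on its right-hand side, so $o_0$ is available to recover $b_0$ at time zero without the RM having to encode it.
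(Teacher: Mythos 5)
Your proposal is correct and follows essentially the same route as the paper's proof: build an RM with one state per reachable belief state, make the labelling function rich enough (action--observation information in the labels) that $\delta_u$ can emulate the belief update of equation~\eqref{eq:belief}, and then invoke sufficiency of the belief state to obtain the equality in Definition~\ref{def:perfect_rm}. Your version merely spells out details the paper leaves implicit (the fresh initial state, the injective labels on triples, the induction $x_t=b_t$, and the $t=0$ boundary), all of which are consistent with the paper's sketch.
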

\begin{proof}
If the reachable belief space $B$ is finite, we can construct an RM that keeps track of the current belief state using one RM state per belief state and emulating their progression using $\delta_u$, and one propositional symbol for every action-observation pair. Thus, the current belief state $b_t$ can be inferred from the last observation, last action, and the current RM state. Hence, the equality from Definition~\ref{def:perfect_rm} holds.
\end{proof}

The second property of perfect RMs is that their optimal policies are also optimal for the POMDP. This means that summarizing the history $h_t = (o_0, a_0, \dots, a_{t-1}, o_t)$ as $\phi(h_t) = \tuple{u_t,o_t}$, where $u_t$ is the current RM state and $o_t$ is the current observation, results in \emph{globally} optimal policies -- i.e., $\pi^*(a_t|h_t) = \pi^*(a_t|u_t,o_t)$.

\begin{theorem}
\label{theo:rm_policy}
Let $\mathcal{R}_{\mathcal{P}}$ be a perfect RM for a POMDP $\mathcal{P_O}$, then any optimal policy for $\mathcal{R}_{\mathcal{P}}$ w.r.t.\ the environmental reward is also optimal for $\mathcal{P_O}$.
\end{theorem}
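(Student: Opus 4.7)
The plan is to reduce the POMDP $\mathcal{P_O}$ together with the perfect RM $\mathcal{R}_{\mathcal{P}}$ to an MDP $\mathcal{M}'$ over the augmented state space $O \times U$, and then invoke standard MDP optimality to conclude that an optimal policy of the form $\pi(a \mid o, u)$ is optimal for the original problem.

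First I would lift Definition~\ref{def:perfect_rm} from one-step predictions to joint predictions of $(o_{t+1}, x_{t+1})$. Because $x_{t+1} = \delta_u(x_t, L(o_t, a_t, o_{t+1}))$ is a deterministic function of quantities already summarized by $(o_t, x_t, a_t, o_{t+1})$, the conditional independence supplied by the definition immediately gives
$$
\Pr(o_{t+1}, x_{t+1}, r_t \mid o_0, a_0, \dots, o_t, a_t) = \Pr(o_{t+1}, x_{t+1}, r_t \mid o_t, x_t, a_t).
$$
This is the Markov property for the process whose state at time $t$ is the pair $(o_t, x_t)$. I would then formally define $\mathcal{M}' = \tuple{O \times U, A, r', p', \gamma, \mu'}$ with $p'$ and $r'$ read off from the right-hand side of the above identity, and $\mu'(o,u) = \omega(o \mid s)\mu(s)$ summed over $s$ with $u = u_0$. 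The labelling function can also include the initial observation via $L(\emptyset,\emptyset,o)$, so the construction is consistent at $t=0$.

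Next I would set up the correspondence between policies. For a Markov policy $\pi'(a \mid o, u)$ on $\mathcal{M}'$, executing $\pi'$ while tracking $x_t$ via $\delta_u$ yields a well-defined history-dependent policy $\pi$ on $\mathcal{P_O}$, and by the Markov identity above, a straightforward induction on $t$ shows that the two policies induce the same joint distribution over $(o_0, a_0, r_0, o_1, a_1, r_1, \dots)$, hence the same expected discounted return. For the converse direction, I would invoke the standard POMDP result that an optimal policy exists among those depending on the current belief state $b_t$: since Theorem~\ref{theo:rm_exists}-style reasoning tells us that $(o_t, x_t)$ is a sufficient statistic for the predictive distribution of all future observations and rewards under any policy, $b_t$ is itself a (deterministic) function of $(o_t, x_t)$, so an optimal policy of the form $\pi^{*}(a \mid o, u)$ exists and attains the POMDP optimum.

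Finally, since $\mathcal{M}'$ is a finite-action MDP (with possibly infinite state space, but the standard Bellman optimality result still applies), any policy $\pi^{*}$ maximizing expected return in $\mathcal{M}'$ achieves the same expected return as the POMDP optimum, which is exactly what is meant by ``optimal for $\mathcal{R}_{\mathcal{P}}$ w.r.t.\ the environmental reward.'' The main obstacle will be the converse direction of the policy correspondence: justifying that restricting attention to policies of the form $\pi(a \mid o, u)$ loses no optimality. I would handle this either by appealing to the sufficient-statistic theorem for POMDPs (arguing that $(o_t, x_t)$ refines any information needed beyond the belief state) or, more self-containedly, by a backward induction on a finite-horizon truncation of the Bellman optimality equation followed by taking $\gamma$-discounted limits, using the one-step identity above at each inductive step to replace history-dependent conditional expectations with ones depending only on $(o_t, x_t)$.
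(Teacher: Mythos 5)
Your overall route is the same as the paper's: the published proof is a two-sentence argument that a perfect RM ``models the belief MDP $\mathcal{M}_B$,'' so that optimal policies over $O \times U$ coincide with optimal policies for $\mathcal{M}_B$, which are in turn optimal for $\mathcal{P_O}$ by the standard POMDP result of Cassandra et al. You are supplying the details the paper elides -- the explicit construction of the cross-product MDP over $O\times U$, the policy correspondence, and the argument that restricting attention to $(o,u)$-policies loses nothing -- so this is a fleshed-out version of the same argument rather than a genuinely different one.

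One intermediate claim, however, is unjustified and false in general: that ``$b_t$ is itself a (deterministic) function of $(o_t,x_t)$.'' Definition~\ref{def:perfect_rm} only constrains the predictive distribution of $(o_{t+1},r_t)$; two histories reaching the same $(o_t,x_t)$ may induce different beliefs over the hidden states $S$, as long as those beliefs are predictively equivalent (think of hidden states that are forever indistinguishable through observations and rewards). What does follow -- by chaining the one-step condition over time and using that $x_{t+1}$ is a deterministic function of $(x_t,o_t,a_t,o_{t+1})$ -- is that the joint distribution of all future observations and rewards, under any future action choices, depends on the history only through $(o_t,x_t)$. That weaker predictive-sufficiency statement is all your argument needs, and your self-contained alternative (finite-horizon backward induction on the Bellman optimality equation, applying the one-step identity at each step and then passing to the discounted limit) is the clean way to close the converse direction without mentioning beliefs at all. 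It is worth noting that the paper's own phrase ``models the belief MDP'' glosses over the same subtlety.
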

\begin{proof}
As the next observation and immediate reward probabilities can be predicted from $O \times U \times A$, a perfect RM for $\mathcal{P_O}$ also models the belief MDP $\mathcal{M}_B$ for $\mathcal{P_O}$. As such, optimal policies over $O \times U$ are equivalent to optimal policies over $\mathcal{M}_B$, which are optimal over $\mathcal{P_O}$ \citep{cassandra1994acting}.
\end{proof}

\subsection{Perfect Reward Machines: How to Learn Them}
\label{sec:learn_rms}

We now consider the problem of learning a perfect RM from traces, assuming one exists w.r.t.\ the given labelling function $L$. Recall that a perfect RM transforms the original problem into a Markovian problem over $O \times U$. Hence, we should prefer RMs that accurately predict the next observation $o'$ and the immediate reward $r$ from the current observation $o$, RM state $u$, and action $a$. This might be achieved by collecting a training set of traces from the environment, fitting a predictive model for $\Pr(o',r|o,u,a)$, and picking the RM that makes better predictions. However, this approach can be very expensive, especially considering that the observations might be images.

Instead, we propose an alternative that focuses on a necessary condition for a perfect RM: the RM must predict what is \emph{possible} and \emph{impossible} in the environment at the abstract level given by the labelling function. E.g., it is impossible to be at $u_3$ in the RM from Figure~\ref{fig:rm_perfect} and make the high-level observation $\{\rzero,\textcookie\}$, because the RM reaches $u_3$ only if the cookie was seen in the blue room (or not to be in the green room) and it leaves $u_3$ as soon as the agent eats the cookie (or presses the button).

This idea is formalized in the optimization model \ref{lrm:problem}. Let $\mathcal{T}=\{\mathcal{T}_0, \ldots, \mathcal{T}_n\}$ be a set of traces, where each trace $\mathcal{T}_i$ is a sequence of observations, actions, and rewards:
\begin{equation}
\mathcal{T}_i=(o_{i,0},a_{i,0},r_{i,1}, \ldots, a_{i,t_i-1},r_{i,t_i},o_{i,t_i}).
\end{equation}
We now look for an RM $\tuple{U,u_0,\delta_u,\delta_r}$ that can be used to predict $L(e_{i,t+1})$ from $L(e_{i,t})$ and the current RM state $x_{i,t}$, where $e_{i,t+1}$ is the experience $(o_{i,t},a_{i,t},o_{i,t+1})$ and we define $e_{i,0}$ as $(\emptyset,\emptyset,o_{i,0})$. The model parameters are the set of traces $\mathcal{T}$, the set of propositional symbols $\mathcal{P}$, the labelling function $L$, and a maximum number of states in the RM $u_\text{max}$. The model also uses the sets $I = \{0 \ldots n\}$, $T_i = \{0 \ldots t_i - 1\}$, and $\Sigma$, where $I$ contains the indices of the traces, $T_i$ their time steps, and $\Sigma$ contains all the high-level observations that appear in $\mathcal{T}$. The model has two auxiliary variables $x_{i,t}$ and $N_{u,\sigma}$. Variable $x_{i,t} \in U$ represents the state of the RM after observing trace $\mathcal{T}_i$ up to time $t$. Variable $N_{u,\sigma} \subseteq 2^{\Sigma}$ is the set of all the next high-level observations seen from the RM state $u$ and the high-level observations $\sigma$ in $\mathcal{T}$. In other words, $\sigma' \in N_{u,\sigma}$ iff $u = x_{i,t}$, $\sigma = L(e_{i,t})$, and $\sigma' = L(e_{i,t+1})$ for some trace $\mathcal{T}_i$ and time $t$. 
\begingroup
\allowdisplaybreaks
\begin{align}
\underset{\tuple{U,u_0,\delta_u,\delta_r}}{\text{minimize}}\;& \sum_{i \in I} \sum_{t \in T_i} \log(|N_{x_{i,t},L(e_{i,t})}|) \tag{\texttt{LRM}} \label{lrm:problem} \\
s.t.\; & \tuple{U,u_0,\delta_u,\delta_r} \in \mathcal{R}_{\mathcal{P}} \label{lrm:properRM} \\
& |U| \leq u_\text{max} \label{lrm:maxU} \\
& x_{i,t} \in U & \forall  i \in I, t \in T_i \cup \{t_i\}  \label{lrm:xInU}\\
& x_{i,0} = u_0 & \forall  i \in I  \label{lrm:x0ISu0}\\
& x_{i,t+1} = \delta_u(x_{i,t}, L(e_{i,t+1})) & \forall  i \in I, t \in T_i  \label{lrm:progXs}\\
& N_{u,\sigma} \subseteq 2^{\Sigma} & \forall  u \in U, \sigma \in \Sigma  \label{lrm:Ndef}\\
& L(e_{i,t+1}) \in N_{x_{i,t},L(e_{i,t})} & \forall i \in I, t \in T_i \label{lrm:NclosedWorld}
\end{align}%
\endgroup
Constraints \eqref{lrm:properRM} and \eqref{lrm:maxU} ensure that we find a well-formed RM over $\mathcal{P}$ with at most $u_\text{max}$ states. Constraints \eqref{lrm:xInU}, \eqref{lrm:x0ISu0}, and \eqref{lrm:progXs} ensure that $x_{i,t}$ is equal to the current state of the RM, starting from $u_0$ and following $\delta_u$. Constraints \eqref{lrm:Ndef} and \eqref{lrm:NclosedWorld} ensure that the sets $N_{u,\sigma}$ contain every $L(e_{i,t+1})$ that has been seen right after $\sigma$ and $u$ in $\mathcal{T}$. The objective function comes from maximizing the log-likelihood for predicting $L(e_{i,t+1})$ using a uniform distribution over all the possible options given by $N_{u,\sigma}$.

A key property of this formulation is that any perfect RM is optimal w.r.t. the objective function in \ref{lrm:problem} when the number of traces tends to infinity:
\begin{theorem}
When the set of training traces (and their lengths) tends to infinity and is collected by a policy such that $\pi(a|o)> \epsilon$ for all $o \in O$ and $a \in A$, any perfect RM with respect to $L$ and at most $u_\text{max}$ states will be an optimal solution to the formulation \ref{lrm:problem}.
\end{theorem}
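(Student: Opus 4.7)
The plan is to argue that, in the infinite-trace limit, the objective of \ref{lrm:problem} at a perfect RM attains a lower bound that no feasible RM can undercut. To begin, I would establish the asymptotic behavior of the empirical support sets $N_{u,\sigma}$: under the exploration assumption $\pi(a|o)>\epsilon$ together with infinite, sufficiently long traces, every reachable $(u,\sigma)$ pair is visited with strictly positive frequency, and every high-level observation $\sigma'$ that follows $(u,\sigma)$ with positive probability appears infinitely often among $L(e_{i,t+1})$. Hence, after minimizing over the $N$-variables, $N_{u,\sigma}$ converges to the ``true'' support $T_{u,\sigma}=\{\sigma' : \Pr(L(e_{t+1})=\sigma'\mid x_t=u,\,L(e_t)=\sigma)>0\}$. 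A law-of-large-numbers / ergodic argument then shows that the rescaled objective $\tfrac{1}{M}\sum_{i,t}\log|N_{x_{i,t},L(e_{i,t})}|$ converges almost surely to $\mathbb{E}[\log|T_{x_t,L(e_t)}|]$ under the stationary joint distribution of $(x_t, L(e_t))$ induced by the exploration policy.

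Next, I would characterize the limiting value at a perfect RM $R$ using Definition~\ref{def:perfect_rm}. By Markovianity with respect to $(o_t,x^R_t,a_t)$, the conditional $\Pr(L(e_{t+1})\mid x^R_t,L(e_t))$ factorizes as a marginalization of the Markov kernel $\Pr(L(e_{t+1})\mid o_t,x^R_t,a_t)$ over $(o_t,a_t)$ consistent with the abstract condition $(x^R_t=u,\,L(e_t)=\sigma)$. The support $T^R_{u,\sigma}$ is therefore the tightest set of next-step high-level observations compatible with the RM's Markov-sufficient summary of the $L$-sequence, so that the perfect-RM value of the expected log-support is a candidate minimum that can be computed without reference to any alternative RM.

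Finally, I would argue that no other feasible RM $R'$ can strictly beat this value. A natural route is to compare $R$ and $R'$ through a common refinement that jointly tracks both state functions: within each cell of that refinement the support is a subset of the supports of both $T^R$ and $T^{R'}$, and by concavity of $\log$ combined with the union bound $T^{R'}_{u',\sigma}=\bigcup_{\text{cells}\subseteq u'} T_{\text{cell},\sigma}$, the weighted log-support under $R'$ is bounded below by the corresponding quantity computed over the common refinement; the Markov sufficiency of $x^R_t$ then lets the same quantity be upper-bounded by the perfect-RM value. The main obstacle I foresee is precisely this last step: translating the intuition that $x^R_t$ is Markov-sufficient into a clean inequality on expected log-supports that holds uniformly over alternative RMs, especially when $R'$ neither refines nor is refined by $R$ and the $u_\text{max}$ constraint is active. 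I expect this to require an information-theoretic packaging of Definition~\ref{def:perfect_rm} that turns the support-union comparison above into a genuinely distribution-free bound on $\mathbb{E}[\log|T^{R'}_{x^{R'}_t,L(e_t)}|]-\mathbb{E}[\log|T^R_{x^R_t,L(e_t)}|]$.
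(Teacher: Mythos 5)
Your opening step (under the $\epsilon$-exploring policy and infinite data, the sets $N_{u,\sigma}$ become exactly the true supports of the next high-level observation given $(u,\sigma)$) matches the paper. But the heart of the theorem --- that no feasible RM with at most $u_\text{max}$ states can achieve a smaller objective than a perfect one --- is precisely the step you leave open, and the route you sketch for it (normalize the objective, pass to a stationary expectation via an ergodic argument, then compare two arbitrary RMs through a common refinement using concavity of $\log$ and a union bound) is both heavier than needed and, as you acknowledge, not closed: you never obtain the inequality bounding $\mathbb{E}[\log|T^{R'}_{x^{R'}_t,L(e_t)}|]$ below by the perfect-RM value, and it is unclear the refinement argument can deliver it when $R'$ neither refines nor is refined by $R$. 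As written, the proposal does not prove the statement.

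The paper's proof avoids any expectation-level comparison by arguing term by term in the sum. Fix any feasible RM $R'$ and any position $(i,t)$ in the data. In the limit, $N'_{x'_{i,t},L(e_{i,t})}$ must contain every high-level observation that can follow the history realized at that position, because that very history drives $R'$ into the pair $(x'_{i,t},L(e_{i,t}))$ and, under the exploring policy, every possible successor of that history eventually appears after it in the data. For a perfect RM, the paper invokes Definition~\ref{def:perfect_rm} --- the next observation is predicted from $(o_t,x_t,a_t)$ exactly as well as from the entire history --- to conclude that its prediction set at that position coincides with this history-level support and hence has minimal cardinality; pointwise minimality of $|N_{x_{i,t},L(e_{i,t})}|$ at every $(i,t)$ then gives minimality of the sum of logarithms. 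This per-position domination is the missing idea in your write-up: it sidesteps the normalization issue (both objectives diverge as the data grows, so comparing rescaled expectations needs extra care), requires no stationarity or ergodicity, and eliminates the need to relate two arbitrary RMs to one another, since each RM is compared only against the history-conditional support at each visited position.
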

\begin{proof}
In the limit, $\sigma' \in N_{u,\sigma}$ if and only if the probability of observing $\sigma'$ after executing an action from the RM state $u$ while observing $\sigma$ is non-zero. In particular, for all $i\in I$ and $t \in T$, the cardinality of $N_{x_{i,t},L(e_{i,t})}$ will be minimal for a perfect RM. This property follows from the fact that perfect RMs make perfect predictions for the next observation $o'$ given $o$, $u$, and $a$. Therefore, as we minimize the sum over $\log(|N_{x_{i,t},L(e_{i,t})}|)$, the objective value for any perfect RM must be minimal.
\end{proof}

Finally, note that the definition of a perfect RM does not impose conditions over the rewards associated with the RM (i.e., $\delta_r$). This is why $\delta_r$ is a free variable in the model \ref{lrm:problem}. However, in order to apply methods that exploit RM structure (such as QRM), we still need $\delta_r$ to model the external reward signals given by the environment. To do so, we estimate $\delta_r(u,\sigma)$ using its empirical expectation over $\mathcal{T}$ -- as commonly done when constructing belief MDPs \citep{cassandra1994acting}. 
Formally,
\begin{equation}
\delta_r(u,\sigma) = \frac{\sum_{i \in I, t \in T_i} r_{i,t+1} 1_{u=x_{i,t} \wedge \sigma=L(e_{t+1})}}{\sum_{i \in I, t \in T_i} 1_{u=x_{i,t} \wedge \sigma=L(e_{t+1})} + \epsilon} \ \ \ \text{for all $u \in U$ and $l \in 2^\mathcal{P}$,}
\label{lrm:reward}
\end{equation}
where $1_c = 1$ if condition $c$ holds (zero otherwise) and $\epsilon$ is a small constant.

\section{Searching for a Perfect Reward Machine}
\label{sec:lrm_models}

We now describe different approaches to solve \ref{lrm:problem}. These include a \emph{mixed integer linear programming (MILP)} model, a \emph{constrained programming (CP)} model, and two \emph{local search (LS)} models. All our models are guaranteed to find optimal solutions given sufficient resources. But first, let us introduce two preprocessing steps over the training traces that our models use: \emph{trace compression} and \emph{prefix trees (PTs)}. 

\subsection{Trace Compression}

Recall that \ref{lrm:problem} works over the high-level observations $ \sigma_t$ given by the labelling function $L(e_t)$, where $e_t$ represents the experience $(o_t,a_t,o_{t+1})$ at time $t$. Thus, the first step is to transform each trace $\mathcal{T}_i$ from being a sequence of interactions with the environment into a sequence of truth value assignments of $\mathcal{P}$ via $L$:
$$
\mathcal{T}_i = (o_{i,0},a_{i,0},r_{i,1}, \ldots, a_{i,t_i-1},r_{i,t_i},o_{i,t_i}) \xRightarrow[]{\sigma_t=L(e_t)} \tau_i = (\sigma_{i,0},\sigma_{i,1},\ldots,\sigma_{i,t_i}).
$$

In the abstract space given by $\mathcal{P}$ and $L$, it is usually the case that the same high-level observation is seen many times in a row. For instance, a typical high-level trace in the cookie domain would look as follows:
$$
\tau = (\rone, \rone, \rone, \rone, \rthree, \rthree, \rthree\textcookiebutton, \rthree, \rthree, \rone, \rone, \rone, \rone, \rone, \rone, \rone, \rtwo, \rone, \rone, \rone, \rone, \rone, \rone, \rone, \rone, \rzero\textcookie).
$$
This trace indicates that the agent was first at the hallway ($\rone$), and then it moved to the orange room ($\rthree$) and pressed the button ($\rthree\textcookiebutton$). After that, it returned to the hallway ($\rone$), noticed that there was no cookie in the blue room ($\rtwo$), came back to the hallway ($\rone$), and finally found a cookie in the green room ($\rzero\textcookie$). Let us assume that the trace ended there for simplicity. As you can see, the most informative moments are when the high-level observations change. Indeed, knowing that the agent stayed at the hallway during 4, 7, or 8 steps is not particularly useful in this case. This suggests that we could potentially compress the training trace (without losing relevant information) by removing duplicated high-level observations that appear consecutively in the trace. For instance, the previous trace will look as follows after being compressed:
$$
\tau_c = (\rone, \rthree, \rthree\textcookiebutton,  \rthree, \rone, \rtwo, \rone, \rzero\textcookie).
$$

Compressing the traces is an optional preprocessing step that has advantages and disadvantages. The advantage is that it considerably improves the quality of the RMs that the models find given a fixed computational budget. Intuitively, any model has to go over the traces to evaluate how good an RM is. Compressing the traces reduces the computational cost of doing so. The disadvantage is that, by compressing the traces, we are assuming that observing two or more times the same high-level observation consecutively does not provide further information about the current POMDP state. If that is the case, then we can compress the traces and the model will still find optimal solutions for \ref{lrm:problem}. If that is not the case, then compressing the traces might help the models to find high-quality RMs faster but they will not find optimal solutions.

Two inconsistencies might arise if we compress the traces. First, note that we are learning RMs using compressed traces but then testing them over uncompressed traces. This is a problem because the optimal solution for \ref{lrm:problem} will exploit the fact that no training trace has the same high-level observation twice in a row. However, when the agent uses the learned RM in the environment (i.e., at test time), it might encounter the same high-level observation many times in a row and, thus, update the RM state in ways that were unintended by \ref{lrm:problem}. For that reason, if we do compress the traces, we have to include the following additional constraint to \ref{lrm:problem}:
\begingroup
\allowdisplaybreaks
\begin{align}
\left[\delta_u(u',\sigma) = u\right] & \;\Rightarrow\; \left[\delta_u(u,\sigma) = u\right] & \forall  u, u' \in U, \sigma \in \Sigma  \label{lrm:compress}
\end{align}%
\endgroup%
This constraint ensures that the learned reward machine will not enter and leave a state $u$ using the same high-level observation $\sigma$. 

The second inconsistency relates to constraint \eqref{lrm:progXs}. According to that constraint, the second high-level observation of the trace is used to progress the initial state in the reward machine (and not the first one). Indeed, \ref{lrm:problem} dictates that $x_0 = u_0$ and $x_1 = \delta_u(u_0,\rthree)$ for $\tau_c$. However, if we were processing the uncompressed version of $\tau_c$ (i.e., $\tau$), then the initial RM state would have been updated using $x_1 = \delta_u(u_0,\rone)$ instead of $x_1 = \delta_u(u_0,\rthree)$ because $\rone$ appears many times in a row at the beginning of $\tau$. We can solve this inconsistency by not compressing the first high-level observation of the trace. This is, to always include the first high-level observation and only compress from the second high-level observation forward. In our example, the proper manner of compressing $\tau$ would be as follows:
$$
\tau_c' = (\rone, \rone, \rthree, \rthree\textcookiebutton,  \rthree, \rone, \rtwo, \rone, \rzero\textcookie).
$$

\subsection{Prefix Trees (PTs)}

Regardless of whether we compress or not the training traces, from now on we will be working with a set of $n+1$ traces $\mathcal{T} = \{\tau_0,\ldots,\tau_n\}$, where each trace is composed of a sequence of high-level observations: $\tau_i = (\sigma_{i,0},\ldots,\sigma_{i,t_i})$. Then, \ref{lrm:problem} defines an independent variable $x_{i,t}$ which models the current RM state at time step $t$ given the trace $\tau_i$ for all $i \in I$ and $t \in T_i$. However, doing so does not exploit the fact that there exist large sets of $x_{i,t}$ variables whose values are equivalent for any reward machine. These are cases where the prefix of two traces $\tau_i$ and $\tau_j$ are identical up to some time step $t$. Indeed, we know that variables $x_{i,t}=x_{j,t}$ if $\sigma_{i,t'} = \sigma_{j,t'}$ for all $t'<t$ because the transitions of a reward machine are deterministic. We can compactly capture this information using \emph{prefix trees (PTs)} \citep{de2010grammatical}.

\begin{figure}
    \centering
    \includegraphics[]{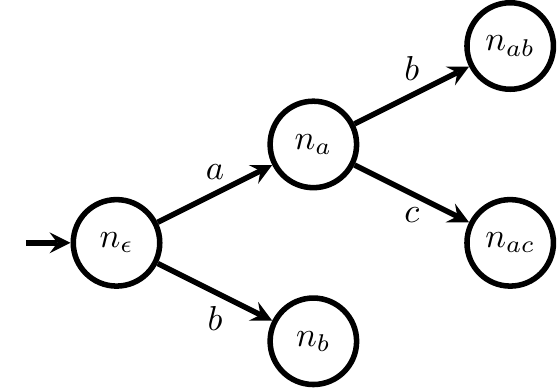}
    \caption{A PT for $\{(b),(ab),(ac)\}$. We took this example from \cite{giantamidis2016learning}.}
    \label{fig:pt_example}
\end{figure}

PTs merge all the training traces into one large tree, where each trace becomes a branch in this tree. As an example, Figure~\ref{fig:pt_example} shows the PT for the following set of traces: $\tau_1 = (b)$, $\tau_2 = (a,a)$, and $\tau_3 = (a,b)$. Each node in a PT represents a prefix that appears in one or more training traces. In the example, there are 5 nodes corresponding to the 5 possible prefixes in $\{\tau_1, \tau_2, \tau_3\}$: $\epsilon$, $(a)$, $(b)$, $(a,b)$, and $(a,c)$, where $\epsilon$ represents the empty trace. Thus, instead of assigning RM states to each variable $x_{i,t}$, our models assign RM states to each node in the PT. That way, these models are forced to assign the same sequence of RM states to all traces as long as their prefixes are identical. For instance, assigning an RM state to node $n_a$ in the PT will automatically assign the same RM state to $x_{2,1}$ and $x_{3,1}$ because $n_a$ represents the current RM state after any trace observes $a$ at the initial time step and, in this case, `$a$' is the first observation of $\tau_2$ and $\tau_3$.

\begin{algorithm}[tb]
\caption{Converting Training Data into Prefix Trees}
\label{fig:algorithm_PT}
\begin{algorithmic}[1]%
\STATE {\bfseries Input:} $\{\tau_0, \ldots, \tau_n\}$
\STATE root\_node $ \leftarrow $ create\_root\_node()
\FOR{$i = 0$ \TO $n$} 
    \STATE node $ \leftarrow $ root\_node
    \FOR{$t = 0$ \TO $|\tau_i|-1$}
        \STATE node.increase\_trace\_counter\_by\_one()
        \IF {\textbf{not} node.has\_child($\sigma_{i,t}$)}
            \STATE node.add\_child($\sigma_{i,t}$)
        \ENDIF
        \STATE node $ \leftarrow $ node.get\_child($\sigma_{i,t}$)
    \ENDFOR
\ENDFOR
\RETURN root\_node
\end{algorithmic}
\end{algorithm} 

Algorithm~\ref{fig:algorithm_PT} shows the pseudo-code to constructing a PT from a given set of traces $\{\tau_0,\ldots,\tau_n\}$. Starting at the root node, the code goes over all the training traces and adds them as branches of the tree (lines 7-8). We also count how many training traces pass through each node in the tree (line 6). This information helps us defining the right weights to penalize predictions in the objective function when reformulating \ref{lrm:problem} from assigning RM states to time steps to assigning RM states to nodes in the PT.

\subsection{Solving LRM via Mixed Integer Linear Programming (MILP)}

In this section, we present a MILP model for \ref{lrm:problem}. MILP solvers are able to solve optimization problems with linear constraints and objective functions. They can also handle continuous and discrete variables. MILP solvers are the state of the art for solving a wide range of discrete optimization problems and they are guaranteed to find optimal solutions given sufficient resources \citep{junger200950}. 

Our MILP model receives as input $u_{\max}$ and the PT built using the training traces $\mathcal{T} = \{\tau_0,\ldots,\tau_n\}$. Note that the traces in $\mathcal{T}$ might or might not be compressed. We use the following notation to refer to the different elements in the PT: $n_{\text{root}}$ is its root node, $S$ is a set containing all the nodes in PT except for $n_{\text{root}}$, $S_{\text{in}} \subset S$ is a subset of $S$ that only contains the inner nodes of PT, $w_n$ is the number of training traces that pass through node $n$, $p(n)$ is the parent of node $n$, $o(n)$ is the high-level observation associated with the edge between nodes $p(n)$ and $n$, and $C(n)$ is the set of children of node $n$. Also, this model assumes that the set $\Sigma$ contains all the high-level observations that appear in $\mathcal{T}$, $|U| = u_{\max}$, and $K = 2^{|\Sigma|}$.

The idea behind our MILP model is to assign RM states to each node in the PT. Then, we look for an assignment that (i) can be produced by a deterministic machine and (ii) optimizes the same objective as \ref{lrm:problem}. To achieve this, we use the following decision variables. Variable $x_{n,u} \in \{0,1\}$ indicates if node $n \in S \cup \{n_{\text{root}}\}$ is assigned the RM state $u \in U$. Variable $d_{u,\sigma,u'} \in \{0,1\}$ represents the possible transition from state $u \in U$ to state $u' \in U$ given observation $\sigma\in \Sigma$ in the RM. Formally, this means that $d_{u,\sigma,u'} = 1$ iff $\delta_u(u,\sigma) = u'$. Variable $p_{u,\sigma,\sigma'} \in \{0,1\}$ indicates if $\sigma' \in \Sigma$ is a possible next observation at RM state $u \in U$ when observing $\sigma \in \Sigma$, where $p_{u,\sigma,\sigma'} = 1$ iff $\sigma' \in N_{u,\sigma}$. Variable $y_{u,\sigma,m} \in \{0,1\}$ represents the cardinality of $N_{u,\sigma}$, meaning that $y_{u,\sigma,m}=1$ iff $|N_{u,\sigma}| = m$. Lastly, variables $z_{n}$ represent the log-likelihood cost for the predictions associated with node $n \in S$. The full model is then as follows:
{%
\begingroup%
\allowdisplaybreaks
\begin{align}
\min\; & \sum_{n \in S} w_n \cdot z_n \tag{\texttt{MILP}} \label{mip:problem}\\
s.t.\; 
& z_n \geq \sum_{m = 1}^{K}y_{u,\sigma,m} \log{(m)} - (1 - x_{n,u} ) \log{(K)} & \forall n \in S, u \in U, \sigma = o(n) \label{mip:cost} \\
& \sum_{m = 1}^{K}y_{u,\sigma,m} = 1 & \forall u \in U, \sigma \in \Sigma \label{mip:oneCount}\\
& \sum_{\sigma' \in \Sigma } p_{u, \sigma, \sigma'} = \sum_{m = 1}^{K}y_{u,\sigma,m}\cdot m  & \forall u \in U, \sigma \in \Sigma \label{mip:predictionCount}\\
& p_{u, o(n), o(n')} \geq x_{n,u} & \forall u \in U, n \in S_{\text{in}}, n' \in C(n) \label{mip:prediction} \\
& \sum_{u'\in U}d_{u,\sigma,u'} = 1 & \forall u \in U, \sigma \in \Sigma \label{mip:oneJump} \\
& \sum_{u \in U} x_{n,u}  = 1 & \forall n \in S, u \in U  \label{mip:oneState}  \\
& x_{n,u_0} = 1 & n = n_{\text{root}} \label{mip:firstSate} \\
& x_{p(n),u} + x_{n,u'} - 1 \leq d_{u, o(n), u'} & \forall u, u' \in U, n \in S \label{mip:stateTransition} \\
& d_{u,\sigma,u'} \leq d_{u', \sigma, u'} & \forall u, u' \in U, u \neq u', \sigma=\Sigma \label{mip:optionalInOut} \\
& x_{n,u} \in \{0,1\} &\forall n \in S \cup \{n_{\text{root}}\}, u \in U \label{mip:domx}\\
& d_{u,\sigma,u'} \in \{0,1\} &\forall u, u' \in U, \sigma \in \Sigma \label{mip:domd} \\
& p_{u,\sigma,\sigma'} \in \{0,1\} &\forall u \in U, \sigma,\sigma' \in \Sigma \label{mip:domp} \\
& y_{u,\sigma,m} \in \{0,1\} &\forall u \in U, \sigma \in \Sigma, m \in \{1..K\} \label{mip:domy}\\
& z_{n} \geq 0 & \forall n \in S \label{mip:domz}
\end{align}
\endgroup%
}%
The objective function of \ref{mip:problem} is a sum over the prediction costs at each node in the tree weighted by how many traces pass through that node. Constraint \eqref{mip:cost} models the log-likelihood cost for each node in the tree. Constraints \eqref{mip:oneCount} and \eqref{mip:predictionCount} compute the cardinality of  $N_{u,l}$. Constraint \eqref{mip:prediction} defines the possible predictions given an RM state and high-level observation. Constraint \eqref{mip:oneJump} enforces that for each RM state a high-level observation can lead to exactly one other RM state. Constraint \eqref{mip:oneState} enforces that exactly one RM state is assigned to each node in the tree. Constraint \eqref{mip:firstSate} assigns the initial RM state to the root node, and constraint \eqref{mip:stateTransition} enforces that there exists a deterministic $\delta_u$ that can produce the assignment of RM states to tree nodes. Constraints \eqref{mip:domx}-\eqref{mip:domz} correspond to the variables' domains. Finally, we note that constraint \eqref{mip:optionalInOut} is an optional constraint that should be included only if the training traces were compressed. This constraint enforces that the RM state does not change after observing the same high-level observation two times consecutively. 

\subsection{Solving LRM via Constrained Programming (CP)}

CP is another technique for solving discrete optimization problems. CP is less restrictive than MILP in the type of variables and constraints that it can handle. For instance, our MILP model had to include many auxiliary decision variables (e.g., $x_{n,u}$, $p_{u,\sigma,\sigma'}$, $y_{u,\sigma,m}$, and $z_n$) in order to linearize different aspects of \ref{lrm:problem}. In contrast, our CP model only uses one set of decision variables $d_{u,\sigma}$ to model $\delta_u(u,\sigma)$ and all other elements from \ref{lrm:problem} are defined w.r.t.\ those variables. In addition, CP solvers are also guaranteed to find optimal solutions given enough resources \citep{rossi2006handbook}.

Our CP model receives the same inputs as our MILP model, including the PT constructed using the (potentially compressed) training traces $\mathcal{T}$. Recall that the different elements in the PT are referred as follows: $n_{\text{root}}$ is the root node, $S$ is the set containing all nodes but $n_{\text{root}}$, $w_n$ is the number of training traces that pass through node $n$, $p(n)$ is the parent of node $n$, $o(n)$ is the high-level observation between $p(n)$ and $n$, and $C(n)$ is the set of children of node $n$. The model also uses the set $U = \{0...u_{\max}-1\}$, the set $\Sigma$ with all the high-level observations in $\mathcal{T}$, and the set $S(\sigma,\sigma')$ containing all the nodes where $\sigma'$ is observed immediately after observing $\sigma$: 
$$
S(\sigma,\sigma') = \{n \;|\; n \in S, n' \in C(n), \sigma=o(n), \sigma'=o(n')\}.
$$

As we previously mentioned, the only decision variables are $d_{u,\sigma} \in U$ for all $u \in U$ and $\sigma \in \Sigma$. Note that $d_{u,\sigma}$ is an integer variable that goes from $0$ to $u_{\max} - 1$ and it models the output of $\delta_u(u,\sigma)$ -- i.e., if the RM state is $u$ and the agent observes $\sigma$, then the next RM state will be $d_{u,\sigma}$. With that, the complete model is as follows:
{%
\begingroup%
\allowdisplaybreaks
\begin{align}
\min\; & \sum_{n \in S}w_n \cdot \texttt{log}(y_{x_n,o(n)}) \tag{\texttt{CP}} \label{cp:problem}\\
s.t.\; 
& y_{u,\sigma} \doteq \sum_{\sigma' \in \Sigma} p_{u,\sigma,\sigma'} & \forall u \in U, \sigma \in \Sigma \label{cp:oneCount}\\
& p_{u,\sigma,\sigma'} \doteq \texttt{logical\_or}\left(\{x_n = u| n \in S(\sigma, \sigma')\}\right) & \forall u \in U, \sigma,\sigma' \in \Sigma \label{cp:prediction} \\
& x_n \doteq d_{u,o(n)} & \forall n\in S, u = x_{p(n)} \label{cp:stateTransition}\\
& x_n \doteq 0 & n = n_{\text{root}} \label{cp:firstSate}\\
& \texttt{if\_then}(d_{u,\sigma}=u', d_{u',\sigma}=u') & \forall u,u \in U, \sigma \in \Sigma  \label{cp:optionalInOut} \\
& d_{u,\sigma} \in U &\forall u \in U, \sigma \in \Sigma \label{cp:domd}
\end{align}
\endgroup%
}%
This model uses the formalism and global constraints available in IBM ILOG CP Optimizer \citep{CPOManual}. Its only decision variables are $d_{u,\sigma}$, defined in constraint \eqref{cp:domd}. Note that the domain of $d_{u,\sigma} \in U$ forces the RM to be deterministic, since there is exactly one possible transition for each $u \in U$ and $\sigma \in \Sigma$. Using $d_{u,\sigma}$, the model defines three auxiliary \emph{CP expressions} in equations \eqref{cp:oneCount}-\eqref{cp:firstSate}. Expression $y_{u,\sigma}$ represents the cardinality of the prediction set $N_{u,\sigma}$ after observing $\sigma \in \Sigma$ from the RM state $u \in U$. Expression $p_{u,\sigma,\sigma'}$ is one if and only if it is possible to observe $\sigma' \in \Sigma$ from the RM state $u \in U$ after observing $\sigma \in \Sigma$ (and zero otherwise). This expression uses a logical OR constraint, $\texttt{logical\_or}(Z)$ which returns 1 iff at least one element $z \in Z$ is true. Expression $x_n \in U$ indicates that the RM state assigned to the tree node $n$. Finally, the objective function is a weighted sum over the prediction errors and constraint \eqref{cp:optionalInOut} is an optional constraint used if the traces were compressed. This constraint enforces a self loop $\delta_u(u',\sigma) = u'$ if $\delta_u(u,\sigma) = u'$ for some $u \in U$.

\subsection{Solving LRM via Local Search (LS) and Tabu Search (TS)}

Finally, here we present our local search methods. We note that MILP and CP are known as \emph{exact methods}. They incrementally construct a search tree where each branch represents a feasible solution to the problem and use different relaxation and propagation rules to prune this tree as much as possible. This approach allows them to find optimal solutions and prove that those solutions are optimal for small to medium size problems. However, they struggle when facing large problems because the size of the tree grows exponentially with the number of variables and computing relaxations and propagation rules become more expensive with the number of constraints.

When solving large scale problems, the best results are often obtained by \emph{heuristic} methods. Heuristic methods propose polynomial-time approximations to solve NP-hard problems. They favor finding good solutions over providing strong optimality guarantees. Here, we explore two local search methods \citep{aarts2003local}.

\begin{figure}
    \centering
    \includegraphics[]{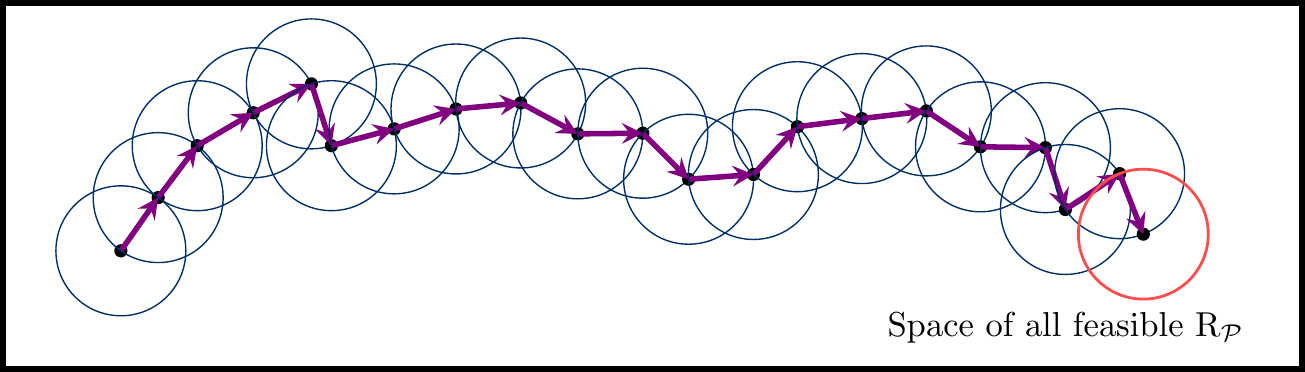}
    \caption{Local search approaches start from some feasible solution and iteratively move to the best solution within its neighbourhood. This process repeats until reaching a locally optimal solution.}
    \label{fig:local_search}
\end{figure}

Figure~\ref{fig:local_search} shows how local search works. In the figure, each point inside the rectangle represents a feasible solution to the problem. Local search starts from a feasible solution and evaluates its objective function. Then, it evaluates the objective function of all the solutions near the current solution and then moves to the best solution within that region. This step is represented by a violet arrow in the figure. The process then repeats until a locally optimal solution is reached, where no neighbouring solution is better than the current solution. 

Unfortunately, local search can converge to locally optimal solutions that might be far from a globally optimal solution. To deal with this issue, one option is to \emph{restart} local search when it finds a locally optimal solution and start over from a different initial solution. Another option is to use \emph{tabu search} \citep{glover1998tabu}. Tabu search is a local search approach that saves the last $n$ solutions in a \emph{tabu list} and always moves to the best neighbour that is not in the list. This allows tabu search to escape locally optimal solutions. Here, we explore these two options for solving \ref{lrm:problem}.

\begin{algorithm}[t]
    \caption{A local search approach with restarts to solve \ref{lrm:problem}}
    \label{alg:local_search}
    \begin{algorithmic}[1]
    \STATE {\bfseries Input:} $\Sigma$, $\mathcal{T}$, $u_\text{max}$, $t_{\max}$
    \STATE $t \leftarrow 0$, $c^* \leftarrow \infty$, $\mathcal{R}^* \leftarrow$ None
    \WHILE{$t \leq t_{\max}$ }
        \STATE $c_{p} \leftarrow \infty$
        \STATE $\mathcal{R} \leftarrow$ sample\_a\_reward\_machine($\mathcal{T}$, $\Sigma$, $u_\text{max}$)
        \STATE $c \leftarrow$ evaluate\_reward\_machine($\mathcal{R}$, $\mathcal{T}$)
        \IF{$c<c^*$}
            \STATE $c^* \leftarrow c$, $\mathcal{R}^* \leftarrow \mathcal{R}$
        \ENDIF
        \WHILE{$t \leq t_{\max}$ \AND $c < c_{p}$}
            \STATE $t \leftarrow t + 1$, $c_{p} \leftarrow c$
            \STATE $\mathcal{N} \leftarrow $ get\_neighbours($\mathcal{R}$, $\Sigma$, $u_\text{max}$)
            \FOR{$\mathcal{R}_n \in \mathcal{N}$}
                \STATE $c_n \leftarrow$ evaluate\_reward\_machine($\mathcal{R}_n$, $\mathcal{T}$)
                \IF{$c_n < c$}
                    \STATE $c \leftarrow c_n$, $\mathcal{R} \leftarrow \mathcal{R}_n$
                \ENDIF
            \ENDFOR
            \IF{$c<c^*$}
                \STATE $c^* \leftarrow c$, $\mathcal{R}^* \leftarrow \mathcal{R}$
            \ENDIF
        \ENDWHILE
    \ENDWHILE
    \RETURN $\mathcal{R}^*$
\end{algorithmic}
\end{algorithm}

Algorithm~\ref{alg:local_search} shows a local search approach with restarts to learn an RM. This algorithm receives the set of high-level observations $\Sigma$, the training traces $\mathcal{T}$, the maximum number of RM state $u_{\max}$, and some termination criteria such as a time limit or a maximum number of steps $t_{\max}$. The algorithm starts from a randomly generated RM (line 5). The initial RM is sampled from a uniform distribution. That is, every RM with at most $u_{\max}$ states can be selected with equal probability. On each iteration, the algorithm evaluates all neighbouring RMs (lines 9-17). We define the neighbourhood of an RM as the set of RMs that differ by exactly one transition (i.e., removing/adding a transition, or changing its value) and evaluate RMs using the objective function of \ref{lrm:problem}. When all neighbouring RMs are evaluated, the algorithm moves to the neighbouring RM with the lowest objective value (line 15), and the process repeats. If at any point a locally optimal solution is reached, then the algorithm starts over from another randomly generated RM (line 9). Finally, the best RM seen so far is returned when the terminal condition is met (line 18).

Algorithm~\ref{alg:tabu_search} shows a tabu search approach to learn a reward machine. It has the same inputs as local search, but it also receives the size of the tabu list $\tau_{\text{size}}$. Our tabu search method is identical to Algorithm~\ref{alg:local_search} except that tabu search initializes the tabu list in line 2, adds the current RM to the tabu list in line 11, and moves to the best solution that is not in the tabu list in lines 12-18.

We note that, technically, these two methods will eventually find an optimal solution. The reason is that both methods restart the search when they get stuck: Local search restarts when it reaches a locally optimal solution and tabu search restarts when it reaches a neighbourhood where all the RMs are in the tabu list. Thus, these methods either find an optimal solution during the search or restart the search.\footnote{In the case of tabu search, we are assuming that the tabu list is large enough.} Since every RM can be sampled with equal probability when restarting, both methods will eventually sample an optimal solution (or find one). That said, the space of possible RMs is so vast that we cannot expect our implementations of local search and tabu search to necessarily find optimal solutions in practice.

\begin{algorithm}[t]
    \caption{A tabu search approach to solve \ref{lrm:problem}}
    \label{alg:tabu_search}
\begin{algorithmic}[1]
    \STATE {\bfseries Input:} $\Sigma$, $\mathcal{T}$, $u_\text{max}$, $t_{\max}$, $\tau_\text{size}$
    \STATE $\tau \leftarrow$ initialize\_tabu\_list($\tau_\text{size}$)
    \STATE $t \leftarrow 0$, $c^* \leftarrow \infty$, $\mathcal{R}^* \leftarrow$ None
    \WHILE{$t \leq t_{\max}$ }
        \STATE $\mathcal{R} \leftarrow$ sample\_a\_reward\_machine($\mathcal{T}$, $\Sigma$, $u_\text{max}$)
        \STATE $c \leftarrow$ evaluate\_reward\_machine($\mathcal{R}$, $\mathcal{T}$)
        \IF{$c<c^*$}
            \STATE $c^* \leftarrow c$, $\mathcal{R}^* \leftarrow \mathcal{R}$
        \ENDIF
        \WHILE{$t \leq t_{\max}$ \AND $\mathcal{R} \not\in \tau$}
            \STATE $t \leftarrow t + 1$, $c \leftarrow \infty$
            \STATE $\tau \leftarrow$ add\_reward\_machine\_to\_tabu\_list($\tau$, $\mathcal{R}$)
            \STATE $\mathcal{N} \leftarrow $ get\_neighbours($\mathcal{R}$, $\Sigma$, $u_\text{max}$)
            \FOR{$\mathcal{R}_n \in \mathcal{N} \setminus \tau$}
                \STATE $c_n \leftarrow$ evaluate\_reward\_machine($\mathcal{R}_n$, $\mathcal{T}$)
                \IF{$c_n < c$}
                    \STATE $c \leftarrow c_n$, $\mathcal{R} \leftarrow \mathcal{R}_n$
                \ENDIF
            \ENDFOR
            \IF{$c<c^*$}
                \STATE $c^* \leftarrow c$, $\mathcal{R}^* \leftarrow \mathcal{R}$
            \ENDIF
        \ENDWHILE
    \ENDWHILE
    \RETURN $\mathcal{R}^*$
\end{algorithmic}
\end{algorithm}

\section{Simultaneously Learning a Reward Machine and a Policy}
\label{sec:main_loop}

We now describe our overall approach to simultaneously finding an RM and exploiting that RM to learn a policy. Algorithm~\ref{alg:main_loop} shows the complete pseudo-code. Our approach starts by collecting a training set of traces $\mathcal{T}$ generated by following a random policy during $t_\text{w}$ ``warmup'' steps (line 2). This set of traces is used to find an initial RM $\mathcal{R}$ using one of our discrete optimization models (line 3). The algorithm then sets the RM state to $u_0$, sets the current high-level observation $\sigma$ to $L(\emptyset,\emptyset,o)$, and initializes the policy $\pi$ (lines 4-5). The standard RL loop is then followed (lines 6-19): an action $a$ is selected according to $\pi(a|o,u)$ and the agent receives the next observation $o'$ and the immediate reward $r$. The RM state is then updated to $u' = \delta_u(u,L(o,a,o'))$ and the last experience $(o, u, a, r, o', u')$ is used to update $\pi$. Finally, the environment and RM are reset if a terminal state is reached (lines 17-18).

\begin{algorithm}[t]
    \caption{Algorithm to simultaneously learn a reward machine and a policy}
    \label{alg:main_loop}
\begin{algorithmic}[1]
    \STATE {\bfseries Input:} $\mathcal{P}$, $L$, $u_\text{max}$, $t_\text{w}$
    \STATE $\mathcal{T} \leftarrow $ collect\_traces($t_\text{w}$)
    \STATE $\mathcal{R}, N \leftarrow $ learn\_rm($\mathcal{P}$, $L$, $\mathcal{T}$, $u_\text{max}$)
    \STATE $o \leftarrow $ env\_get\_initial\_state(), $u \leftarrow u_0$, $\sigma \leftarrow L(\emptyset,\emptyset,o)$
    \STATE $\pi \leftarrow $ initialize\_policy()
    \FOR{$t = 1$ \TO $t_\text{train}$} 
        \STATE $a \leftarrow $ select\_action($\pi$, $o$, $u$)
        \STATE $o', r, \text{done} \leftarrow $ env\_execute\_action($a$)
        \STATE $u', \sigma' \leftarrow \delta_u(u,L(o,a,o')), L(o,a,o')$
        \STATE $\pi \leftarrow $ improve($\pi$, $o$, $u$, $\sigma$, $a$, $r$, $o'$, $u'$, $\sigma'$, done, $N$)
        \IF{$\sigma' \not\in N_{u,\sigma}$} 
            \STATE $\mathcal{T} \leftarrow \mathcal{T} \cup{}$get\_current\_trace()
            \STATE $\mathcal{R}', N \leftarrow $ relearn\_rm($\mathcal{R}$, $\mathcal{P}$, $L$, $\mathcal{T}$, $u_\text{max}$)
            \IF{$\mathcal{R} \neq \mathcal{R}'$} 
                \STATE $\text{done} \leftarrow $ \TRUE, $\mathcal{R} \leftarrow \mathcal{R}'$
                \STATE $\pi \leftarrow $ initialize\_policy()
            \ENDIF
        \ENDIF
        \IF{done} 
            \STATE $o' \leftarrow $ env\_get\_initial\_state(), $u' \leftarrow u_0$, $\sigma' \leftarrow L(\emptyset,\emptyset,o)$
        \ENDIF
        \STATE $o \leftarrow o'$, $u \leftarrow u'$, $\sigma \leftarrow \sigma'$
    \ENDFOR
    \RETURN $\pi$
\end{algorithmic}
\end{algorithm}

If on any step, there is evidence that the current RM might not be perfect, our approach will attempt to find a new one (lines 11-16). Recall that the RM $\mathcal{R}$ was selected using the cardinality of its prediction sets $N$, where $N_{u,\sigma}$ is the set of high-level observations seen from the RM state $u$ immediately after observing $\sigma$ in the training data. If the current high-level observation $\sigma'$ is not in $N_{u,\sigma}$, then adding the current trace to $\mathcal{T}$ will increase the size of $N_{u,\sigma}$ for $\mathcal{R}$ and, in consequence, $\mathcal{R}$ may no longer be the best RM. Therefore, if $\sigma' \not\in N_{u,\sigma}$, we add the current trace to $\mathcal{T}$ and learn a new RM. Our method only uses the new RM if its cost is lower than $\mathcal{R}$'s and, if the RM is updated, a new policy is learned from scratch (lines 14-16).

Given the current RM, we can use any RL algorithm to learn a policy $\pi(a|o,u)$, by treating the combination of $o$ and $u$ as the current state. If the RM is perfect, then the optimal policy $\pi^*(a|o,u)$ will also be optimal for the original POMDP (as stated in Theorem~\ref{theo:rm_policy}). In this case, we can ignore the reward $\delta_r$ that comes from the RM and only consider the reward received directly from the environment. However, to further exploit the problem structure exposed by the RM (such as with QRM), we need to set $\delta_r$. We do so using the empirical average, as described in Section~\ref{sec:learn_rms}.

Let us now explain how we incorporate QRM into this process. As explained in Section~\ref{sec:RMsPO}, standard QRM under partial observability can introduce a bias because an experience $e=(o,a,o')$ might be more or less likely depending on the RM state that the agent was in when the experience was collected. We partially address this issue by updating $Q_u$ using $(o,a,o')$ iff $L(o,a,o') \in N_{u,\sigma}$, where $\sigma$ was the current high-level observation that generated the experience $(o,a,o')$. Hence, we do not transfer experiences from $u_i$ to $u_j$ if the current RM does not believe that $(o,a,o')$ is possible in $u_j$. For example, consider the cookie domain and the perfect RM from Figure~\ref{fig:rm_perfect}. If some experience consists of entering to the green room and seeing a cookie, then this experience will not be used by states $u_0$ and $u_3$ as it is impossible to observe a cookie at the green room from those states. While adding this rule works in many cases, it does not fully address the problem. We further discuss this issue in Section~\ref{sec:limitations}.

\section{Experimental Evaluation}
\label{sec:results}

In this section, we provide an empirical evaluation of our method in three partially observable environments. Our evaluation consists of two parts. First, we compare the effectiveness of our mixed integer linear programming model (MILP), our constrained programming model (CP), our local search with restart algorithm (LS), and our tabu search method (TS) to solve \ref{lrm:problem}. We then show how the combination of learning an RM and a policy using double DQN (LRM+DDQN) and deep QRM (LRM+DQRM) compares to different baselines. As a brief summary, our results show the following:
\begin{enumerate}
    \itemsep0em 
    \item MILP and CP find optimal solutions for small instances of \ref{lrm:problem}.
    \item LS and TS find better solutions than MILP and CP for large instances of \ref{lrm:problem}.
    \item LS consistently finds better solutions than TS.
    \item Our LRM-based methods can outperform A3C, ACER, PPO, and DDQN.
    \item LRM+DQRM learns faster then LRM+DDQN, but it is less stable.
\end{enumerate}

\subsection{Domains}
\label{sec:domains}

We tested our approach on three partially observable domains, shown in Figure~\ref{fig:domains2}. These environments consist of three rooms connected by a hallway. The agent can move in the four cardinal directions but its actions fail with a 5\% probability. The agent can only see what it is in the room that it currently occupies, as shown in Figure~\ref{fig:d_view2}. What makes these tasks difficult is the hallway. The hallway forces the agent to observe long sequences of identical observations multiple times to solve a task. However, depending on previous observations, the optimal actions and expected returns will be completely different when the agent is in the hallway.

The first environment is the \emph{cookie domain} (Figure~\ref{fig:d_cookie2}) described in Section~\ref{sec:RMsPO}. Each episode is $5,000$ steps long, during which the agent should attempt to get as many cookies as possible. To do so, it has to press the button in the orange room and then look for the cookie that is delivered to the blue or green room.

The second environment is the \emph{symbol domain} (Figure~\ref{fig:d_symbol}). This domain has three symbols $\clubsuit$, $\spadesuit$, and $\blacklozenge$ in the blue and green rooms. At the beginning of an episode, one symbol from $\{\clubsuit, \spadesuit, \blacklozenge\}$ and possibly a right or left arrow are randomly placed at the orange room. Intuitively, that symbol and arrow will tell the agent where to go, for example,  $\clubsuit$ and $\rightarrow$ tell the agent to go to $\clubsuit$ in the east room. If there is no arrow, the agent can go to the target symbol in either room. An episode ends when the agent reaches any symbol in the blue or green room, at which point it receives a reward of $+1$ if it reached the correct symbol and $-1$ otherwise. All other steps in the environment provide no reward.

The third environment is the \emph{2-keys domain} (Figure~\ref{fig:d_key}). The agent receives a reward of $+1$ when it reaches the coffee in the orange room. To do so, it must open the two doors, shown in brown. Each door requires a different key to open it, and the agent can only carry one key at a time. At the beginning of each episode, the two keys are randomly located in either the green room, the blue room, or split between them. To solve this problem, the agent must keep track of the locations of the keys.

\begin{figure}
    \centering
    \begin{subfigure}[b]{.49\textwidth}
        \centering
        \includegraphics[width=0.75\textwidth]{figs/room-po.pdf}
        \subcaption{Agent's view.}
        \label{fig:d_view2}
    \end{subfigure}
    \begin{subfigure}[b]{.49\textwidth}
        \centering
        \includegraphics[width=0.75\textwidth]{figs/cookie.pdf}
        \subcaption{Cookie domain.}
        \label{fig:d_cookie2}
    \end{subfigure}
    
    \vspace{2mm}
    \begin{subfigure}[b]{.49\textwidth}
        \centering
        \includegraphics[width=0.75\textwidth]{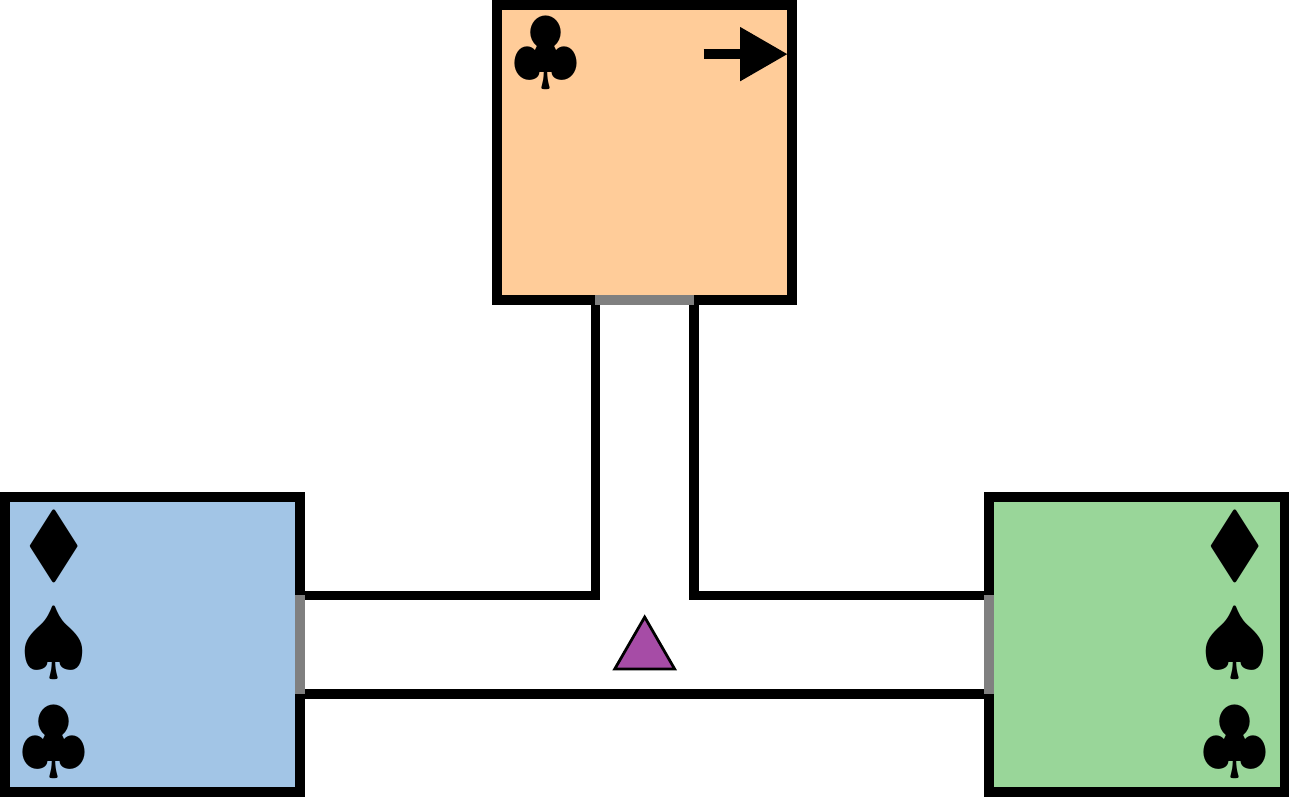}
        \subcaption{Symbol domain.}
        \label{fig:d_symbol}
    \end{subfigure}
    \begin{subfigure}[b]{.49\textwidth}
        \centering
        \includegraphics[width=0.75\textwidth]{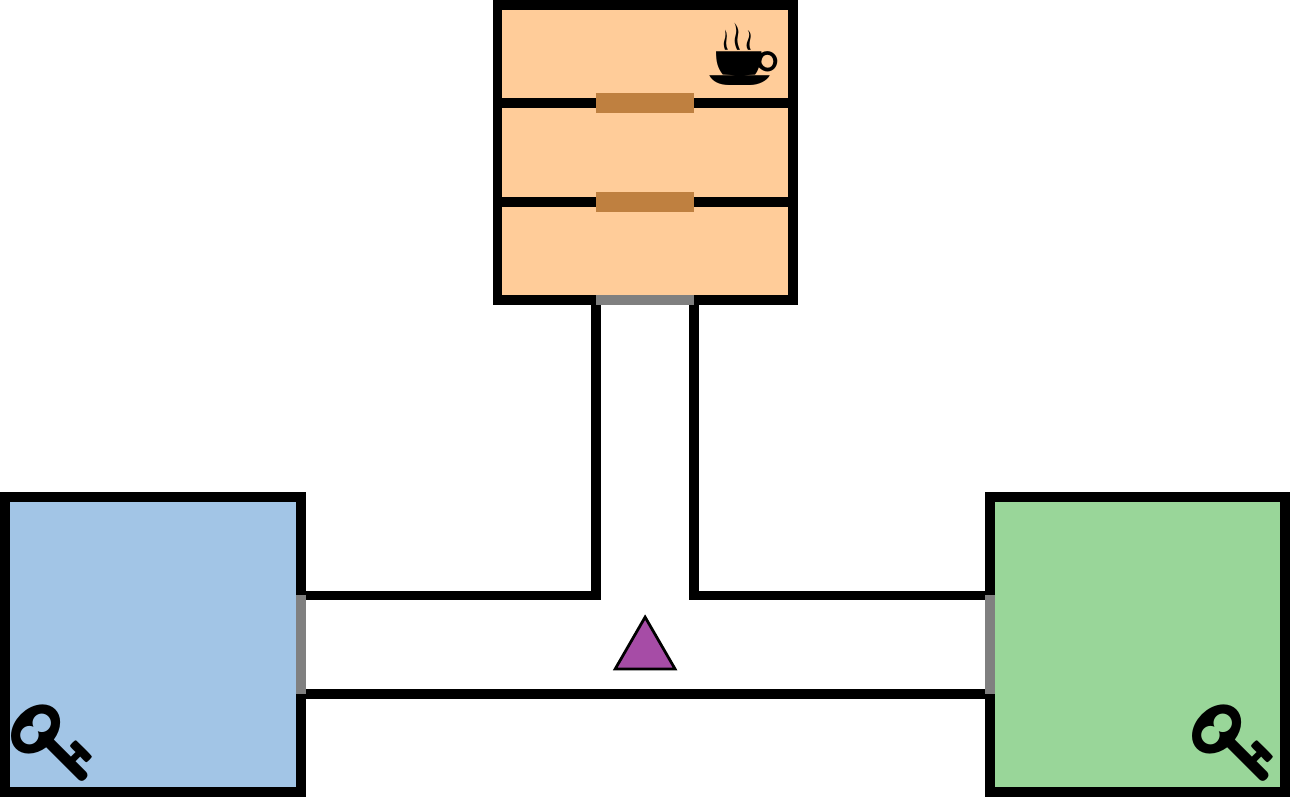}
        \subcaption{2-keys domain.}
        \label{fig:d_key}
    \end{subfigure}
    \caption{Partially observable domains where the agent
can only see what is in the current room.}
    \label{fig:domains2}
\end{figure}

\subsection{Comparisons Between the Discrete Optimization Models}

We first compare the performance of our four models for solving different instances of \ref{lrm:problem}. The objective of this experiment is to compare how well each model scales as we increase the size of the training data and the size of the reward machine. To that end, we generated 20 training sets per domain, where each training set consists of $10^3$, $10^4$, $10^5$, or $10^6$ experiences collected by following a uniformly random policy. We sampled five training sets per each possible size and learned reward machines with at most $5$ or $10$ states. This gave a total of $120$ problems instances of \ref{lrm:problem}.

\begin{table}[t]
    \centering
    \caption{Comparing different models for solving \ref{lrm:problem} in problem instances with training sets varying from $10^3$ to $10^6$ experiences and $u_{\max} \in \{5, 10\}$. Each row includes five problem instances.}
    \label{tab:LRM_model_comparison}
    {\small  \setlength{\tabcolsep}{4.9pt}
        \begin{tabular}{ crrrrrrrrrrr }
        \toprule
        \multicolumn{3}{c}{\textbf{Configuration}} & \multicolumn{4}{c}{\textbf{Avg. objective}} & \multicolumn{4}{c}{\textbf{No. best}} \\
        \cmidrule(rl){1-3} \cmidrule(rl){4-7}  \cmidrule(rl){8-11}
        Dataset & $|\mathcal{T}|$  & $|\mathcal{T}_c|$ & MILP & CP & LS & TS & MILP & CP & LS & TS \\
        \midrule
          & $10^3$ & 59 & \textbf{12.6} & \textbf{12.6} & 14.2 & 13.8 & \textbf{5} & \textbf{5} & 1 & 1\\
        Cookie& $10^4$ & 487 & 237.3 & \textbf{226.7} & 229.1 & 230.2 & 1 & \textbf{5} & 2 & 0\\
        $(u_{\max}=5)$& $10^5$ & 4943 & 3097.0 & 2700.4 & \textbf{2699.7} & 2719.7 & 0 & \textbf{3} & 2 & 0\\
        & $10^6$ & 48663 & 31075.1 & 28226.1 & \textbf{26462.6} & 26833.3 & 0 & 0 & \textbf{5} & 0\\
        \midrule
        & $10^3$ & 59 & \textbf{6.5} & 6.8 & 9.6 & 10.5 & \textbf{5} & 4 & 0 & 0\\
        Cookie& $10^4$ & 487 & 233.3 & 204.6 & 206.0 & \textbf{204.5} & 0 & 1 & 0 & \textbf{4}\\
        $(u_{\max}=10)$& $10^5$ & 4943 & 3197.0 & 2713.7 & \textbf{2658.9} & 2696.8 & 0 & 0 & \textbf{5} & 0\\
        & $10^6$ & 48663 & 30709.5 & 28366.8 & \textbf{26461.7} & 27092.0 & 0 & 0 & \textbf{5} & 0\\
        \midrule
        & $10^3$ & 41 & \textbf{21.0} & \textbf{21.0} & \textbf{21.0} & \textbf{21.0} & \textbf{5} & \textbf{5} & \textbf{5} & \textbf{5}\\
        Symbol& $10^4$ & 268 & \textbf{218.8} & \textbf{218.8} & \textbf{218.8} & 220.0 & \textbf{5} & \textbf{5} & \textbf{5} & 3\\
        $(u_{\max}=5)$& $10^5$ & 2597 & 3423.7 & 2897.9 & \textbf{2896.7} & 2902.3 & 0 & 3 & \textbf{5} & 2\\
        & $10^6$ & 25875 & 36705.4 & 29689.9 & \textbf{29687.7} & 29688.8 & 0 & 4 & \textbf{5} & 3\\
        \midrule
        & $10^3$ & 41 & \textbf{16.2} & \textbf{16.2} & 16.5 & 16.4 & \textbf{5} & \textbf{5} & 2 & 2\\
        Symbol& $10^4$ & 268 & 185.8 & \textbf{181.2} & 181.5 & 185.0 & 1 & \textbf{5} & 3 & 0\\
        $(u_{\max}=10)$& $10^5$ & 2597 & 3416.1 & 2620.7 & \textbf{2583.5} & 2620.4 & 0 & 0 & \textbf{5} & 0\\
        & $10^6$ & 25875 & 36216.6 & 27992.9 & \textbf{27050.4} & \textbf{27050.4} & 0 & 0 & \textbf{5} & \textbf{5}\\
        \midrule
        & $10^3$ & 42 & \textbf{6.9} & \textbf{6.9} & 7.6 & 7.5 & \textbf{5} & \textbf{5} & 2 & 2\\
        2-Keys& $10^4$ & 378 & 196.5 & \textbf{176.7} & 176.9 & 180.7 & 1 & \textbf{5} & 3 & 1\\
        $(u_{\max}=5)$& $10^5$ & 3690 & 3713.2 & 2364.7 & \textbf{2349.6} & 2391.4 & 0 & 0 & \textbf{5} & 0\\
        & $10^6$ & 37923 & 38875.3 & 29379.9 & \textbf{24397.0} & 24762.1 & 0 & 0 & \textbf{5} & 0\\
        \midrule
        & $10^3$ & 42 & \textbf{3.5} & \textbf{3.5} & 5.4 & 5.1 & \textbf{5} & \textbf{5} & 0 & 0\\
        2-Keys& $10^4$ & 378 & 184.4 & 151.6 & \textbf{145.4} & 157.2 & 0 & 0 & \textbf{5} & 0\\
        $(u_{\max}=10)$& $10^5$ & 3690 & 3746.1 & 2363.8 & \textbf{2210.6} & 2237.6 & 0 & 0 & \textbf{5} & 0\\
        & $10^6$ & 37923 & 38087.0 & 29065.0 & \textbf{23352.9} & 23558.8 & 0 & 0 & \textbf{5} & 0\\
        \midrule
        \multicolumn{3}{c}{\textbf{Average/Total}} & 9732.7 & 7900.3 & \textbf{7251.8} & 7325.2 & 38 & 60 & \textbf{85} & 28\\
        \bottomrule
    \end{tabular}}
\end{table}

Each approach was run with a 10-minute time limit using 62 cores on a Threadripper 2990WX processor with 124GB of RAM. We used Gurobi 9.1 \citep{gurobi} to solve the MILP model and IBM ILOG CP Optimizer 12.8 \citep{CPOManual} for the CP model. These are sophisticated state-of-the-art solvers. In contrast, we used a simple Python implementation of local search and tabu search in our experiments. We set $\tau_\text{size}=100$ for tabu search. We note that local search and tabu search are stochastic approaches that, in contrast to MILP and CP, might find a different solution on each run. For that reason, we ran local search and tabu search 5 times per problem instance and report the average cost across those runs.

Table~\ref{tab:LRM_model_comparison} shows the final results. Each row shows the aggregated results over five problem instances that share the same domain (i.e., cookie, symbol, or 2-keys), maximum number of RM states (i.e., $u_{\max} \in \{5,10\}$), and size of the training set (i.e., $|\mathcal{T}| \in \{10^3,10^4,10^5,10^6\}$). Each row also shows the average size of the training set $|\mathcal{T}_c|$ after the traces are compressed (as described in Section~\ref{sec:lrm_models}). The table reports the average objective function of each model, where lower is better, and the number of instances where each model found the best solution among all others.

For training sets with less than $10,000$ experiences, our CP model tends to find the best solutions. However, for larger instance, local search methods dominate. Note that continuously restarting local search is a better strategy for learning RMs than using a tabu list in these domains. Still, the performance of TS is not too far from LS and, hence, we test both approaches for learning RMs in our next experiments.

\subsection{Reinforcement Learning Experiments}
\label{sec:rl_results}

We tested two versions of our \emph{learned reward machine (LRM)} method: LRM+DDQN and LRM+DQRM. Both learn RMs from experience but LRM+DDQN learns a policy using DDQN \citep{van2016deep} while LRM+DQRM uses the modified version of QRM described in Section~\ref{sec:main_loop}. To learn the reward machine, these approaches solve \ref{lrm:problem} using local search with restarts or tabu search. In all domains, we used $u_\text{max} = 10$, $t_\text{max} = 100$, $\tau_\text{size} = 100$, $t_\text{w} = 200,000$, an epsilon greedy policy with $\epsilon = 0.1$, and a discount factor $\gamma = 0.9$. We compared against 4 baselines: DDQN \citep{van2016deep}, A3C \citep{mnih2016asynchronous}, ACER \citep{wang2016sample}, and PPO \citep{schulman2017proximal}. 
DDQN uses the concatenation of the last 10 observations as input which gives DDQN a limited memory to better handle the domains. A3C, ACER, and PPO use an LSTM to summarize the history. Note that the output of the labelling function was also given to the baselines, as described below. 

\subsubsection{Hyperparameters and Features}
For LRM+DDQN and LRM+DQRM, the neural network used has 5 fully connected layers with 64 neurons per layer. On every step, we trained the network using 32 sampled experiences from a replay buffer of size 100,000 and a learning rate of $5\cdot 10^{-5}$. The target networks were updated every $100$ steps. 

The DDQN baseline uses the same parameters and network architecture as our LRM methods, but its input is the concatenation of the last 10 observations, as commonly done by Atari playing agents \citep{mnih2015human}. This gives DDQN a limited memory to better handle partially observable domains. We note that since the optimal path from any one room to another is less than 10 steps, giving the agent the last 10 observations means that the agent has enough information to perfectly summarize its history if it can figure out how to do so. The rest of the baselines, namely A3C, ACER, and PPO, use an LSTM to summarize the history. 

To select hyperparameters for A3C, ACER, and PPO, we followed the same methodology that was used in their original publications. We ran each approach at least 30 times per domain, and on every run, we randomly selected the number of hidden neurons for the LSTM from $\{64, 128, 256, 512\}$ and a learning rate from (1e-3, 1e-5). We also sampled $\delta$ from $\{0,1,2\}$ for ACER and the clip range from $(0.1,0.3)$ for PPO. Other parameters were fixed to their default values.

While interacting with the environment, the agents were given a ``top-down" view of the world represented as a set of binary matrices. One matrix had a 1 in the current location of the agent, one had a 1 in only those locations that are currently observable, and the remaining matrices each corresponded to an object in the environment and had a 1 at only those locations that were both currently observable and contained that object (i.e., locations in other rooms are ``blacked out"). The agent also had access to features indicating if they were carrying a key, which color room they were in, and the current status of the events detected by the labelling function.

\begin{figure}[t]
    \centering
    \begin{subfigure}{0.49\columnwidth}
        \centering
        \includegraphics[]{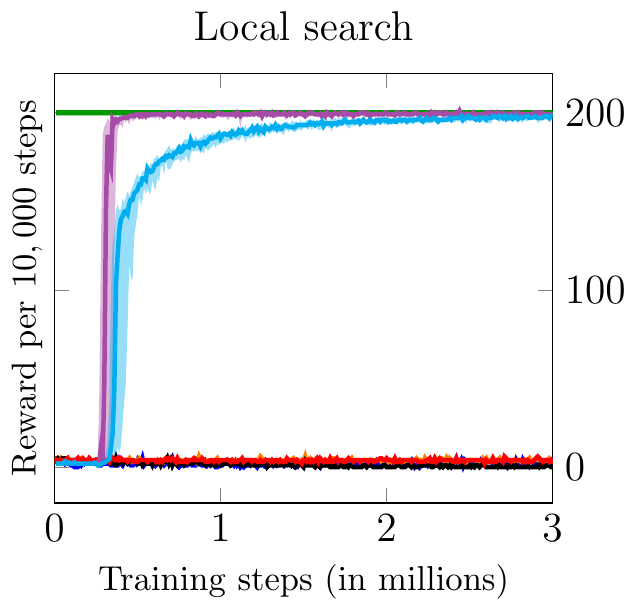}
    \end{subfigure}
    \begin{subfigure}{0.49\columnwidth}
        \centering
        \includegraphics[]{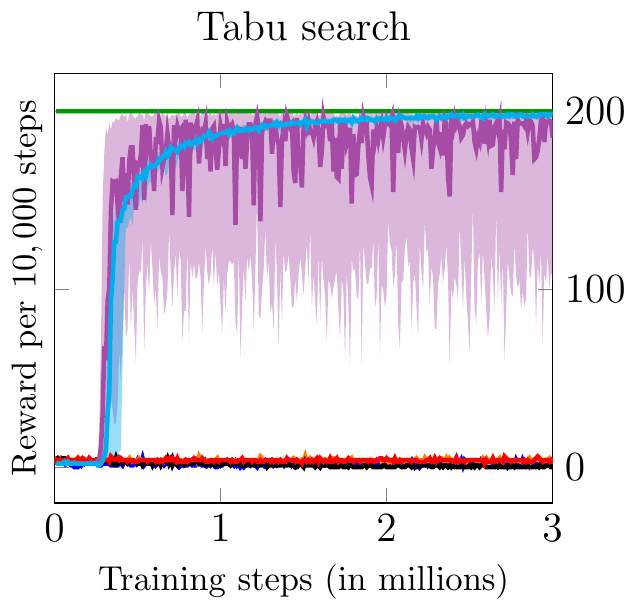}
    \end{subfigure}
    
    \vspace{1mm}
    \begin{tikzpicture}
\node[draw=black] {%
	\begin{tabular}{llll}
	\textbf{Legend:} 
	& \entry{blue}{DDQN} 
	& \entry{orange}{A3C} 
	& \entry{cyan}{LRM + DDQN}\\ 	
	\entry{green!60!black}{Optimal}
	& \entry{red}{ACER} 
	& \entry{black}{PPO}  
	& \entry{violet!70!white}{LRM + DQRM}
	\end{tabular}};
\end{tikzpicture}
    \caption{Results on the cookie domain. LRM is solved using local search or tabu search.}
    \label{fig:cookie-results}
\end{figure}

\subsubsection{Results}

\begin{figure}[t]
    \centering
    \begin{subfigure}{0.49\columnwidth}
        \centering
        \includegraphics[]{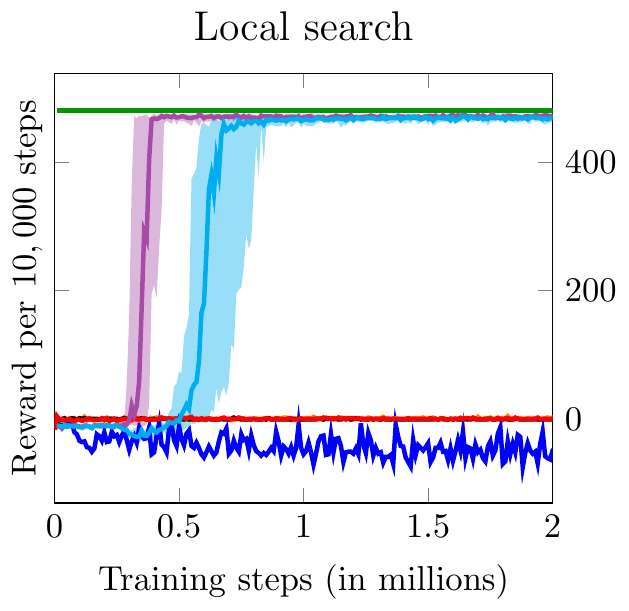}
    \end{subfigure}
    \begin{subfigure}{0.49\columnwidth}
        \centering
        \includegraphics[]{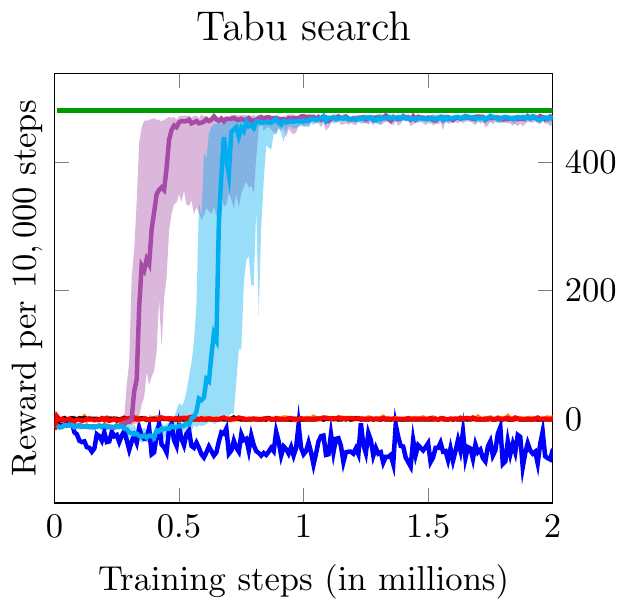}
    \end{subfigure}
    
    \vspace{1mm}
    \begin{tikzpicture}
\node[draw=black] {%
	\begin{tabular}{llll}
	\textbf{Legend:} 
	& \entry{blue}{DDQN} 
	& \entry{orange}{A3C} 
	& \entry{cyan}{LRM + DDQN}\\ 	
	\entry{green!60!black}{Optimal}
	& \entry{red}{ACER} 
	& \entry{black}{PPO}  
	& \entry{violet!70!white}{LRM + DQRM}
	\end{tabular}};
\end{tikzpicture}
    \caption{Results on the symbol domain. LRM is solved using local search or tabu search.}
    \label{fig:symbol-results}
\end{figure}

\begin{figure}[t]
    \centering
    \begin{subfigure}{0.49\columnwidth}
        \centering
        \includegraphics[]{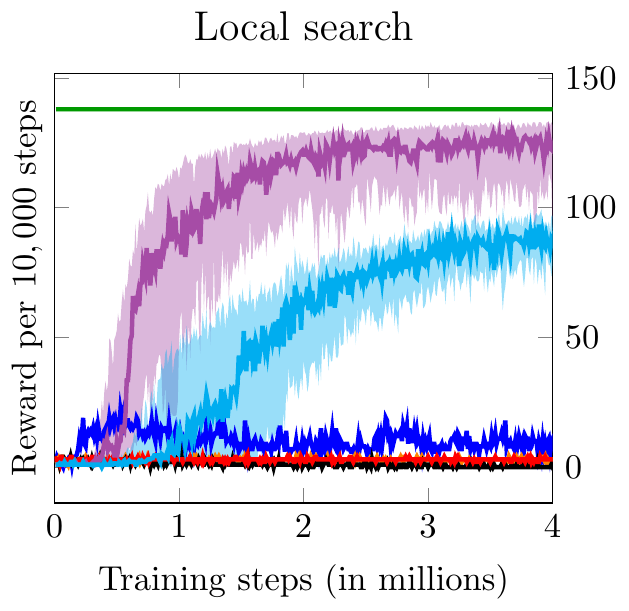}
    \end{subfigure}
    \begin{subfigure}{0.49\columnwidth}
        \centering
        \includegraphics[]{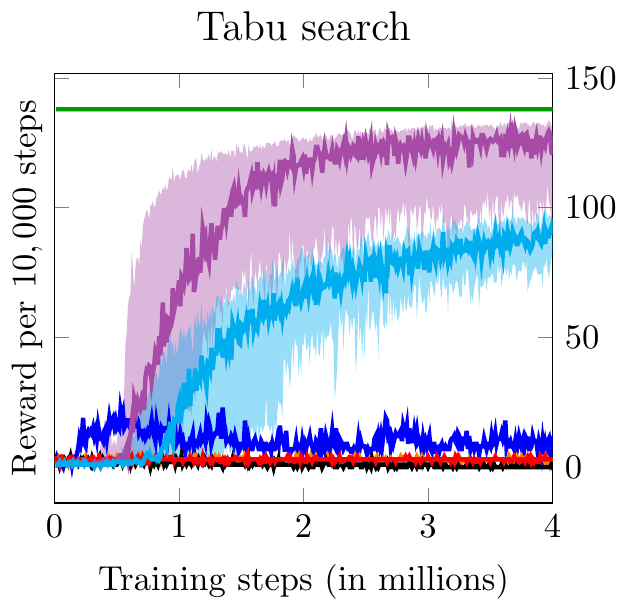}
    \end{subfigure}
    
    \vspace{1mm}
    \begin{tikzpicture}
\node[draw=black] {%
	\begin{tabular}{llll}
	\textbf{Legend:} 
	& \entry{blue}{DDQN} 
	& \entry{orange}{A3C} 
	& \entry{cyan}{LRM + DDQN}\\ 	
	\entry{green!60!black}{Optimal}
	& \entry{red}{ACER} 
	& \entry{black}{PPO}  
	& \entry{violet!70!white}{LRM + DQRM}
	\end{tabular}};
\end{tikzpicture}
    \caption{Results on the 2-keys domain. LRM is solved using local search or tabu search.}
    \label{fig:keys-results}
\end{figure}

Figures \ref{fig:cookie-results}, \ref{fig:symbol-results}, and \ref{fig:keys-results} show the total cumulative rewards that each approach gets every $10,000$ training steps and compares it to the optimal policy. For the LRM algorithms, the figures show the median performance over 30 runs per domain, and percentile 25 to 75 in the shadowed area. For the DDQN baseline, we show the maximum performance seen for each time period over 5 runs per problem. Similarly, we also show the maximum performance over the 30 runs of A3C, ACER, and PPO per period. All the baselines outperformed a random policy, but none make much progress on any of the domains. Each figure shows two settings. In the left, it shows the performance when LRM is solved using local search with restarts. In the right, it shows the case where LRM is solved using tabu search. Note that this only affects the LRM methods. The baselines' performance is identical in the left and right figures.

As the results show, LRM-based methods largely outperform all the baselines in these domains, reaching an optimal policy in the cookie domain (Figure~\ref{fig:cookie-results}) and a close-to-optimal policy in the symbol domain (Figure~\ref{fig:symbol-results}). We also note that LRM+DQRM learns faster than LRM+DDQN. In particular, LRM+DQRM converged to considerably better policies in the 2-keys domain (Figure~\ref{fig:keys-results}). However, LRM+DQRM is more unstable than LRM+DDQN when solving \ref{lrm:problem} via tabu search. We believe this behaviour is due to two factors. First, tabu search is likely finding worse solutions than local search, as suggested by Table~\ref{tab:LRM_model_comparison}. Second, QRM exploits the structure of the learned RM. Thus, it is reasonable to expect that converging to a suboptimal RM would hurt the performance of DQRM more than the performance of DDQN.

\section{Discussion}
\label{sec:limitations}

Solving partially observable RL problems is challenging and LRM was able to solve three problems that were conceptually simple but presented a major challenge to A3C, ACER, and PPO with LSTM-based memories. A key idea behind these results was to optimize over a necessary condition for perfect RMs. This objective favors RMs that are able to predict possible and impossible future observations at the abstract level given by the labelling function $L$. In this section, we discuss the advantages and current limitations of such an approach.

We begin by considering the performance of local search methods in our domains. Given a training set composed of one million transitions, our simple Python implementation of local search takes less than 2.5 minutes to learn an RM across all our environments, when using 62 workers on a Threadripper 2990WX processor and $t_{\max}=100$. Note that local search's main bottleneck is evaluating the neighbourhood around the current RM solution. As the size of the neighbourhood depends on the size of the set of propositional symbols $\mathcal{P}$, exhaustively evaluating the neighbourhood may sometimes become impractical. To handle such problem, we might import ideas from the \emph{large neighborhood search} literature \citep{pisinger2010large}.

Regarding limitations, learning the RM at the abstract level is efficient but requires ignoring (possibly relevant) low-level information. For instance, Figure~\ref{fig:d_gravity} shows an adversarial example for LRM. The agent receives reward for eating the cookie ($\textcookieeaten$). There is an external force pulling the agent down -- i.e., the outcome of the ``\texttt{move-up}'' action is actually a downward movement with high probability. The agent can press a button ($\textcookiebutton$) to turn off (or back on) the external force. Hence, the optimal policy is to press the button and then eat the cookie. Given $\mathcal{P} = \{\textcookieeaten, \textcookiebutton\}$, a perfect RM for this environment is fairly simple (see Figure~\ref{fig:grav_rm}) but LRM might not find it, even if the traces are not compressed. The reason is that pressing the button changes the low-level probabilities in the environment but does not change what is possible or impossible at the abstract level. In other words, while the LRM objective optimizes over necessary conditions for finding a perfect RM, those conditions are not sufficient to ensure that an optimal solution will be a perfect RM. In addition, if a perfect RM is found, our heuristic approach to share experiences in QRM would not work as intended because the experiences collected when the force is on (at $u_0$) would be incorrectly used to update the policy for the case where the force is off (at $u_1$). 

\begin{figure}
    \centering
    \begin{subfigure}[b]{.35\textwidth}
        \centering
        \includegraphics[width=0.6\textwidth]{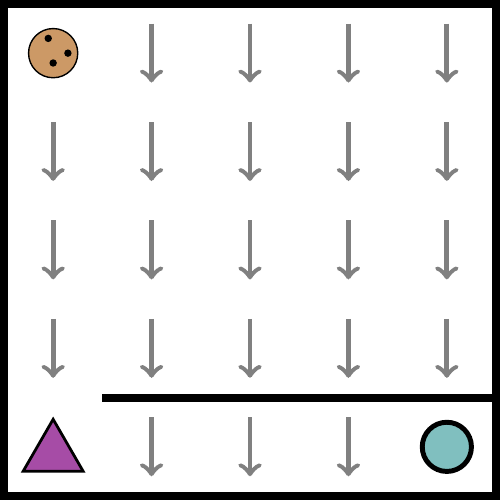}
        \subcaption{The gravity domain.}
        \label{fig:d_gravity}
    \end{subfigure}
    \begin{subfigure}[b]{.63\textwidth}
        \centering
        \includegraphics[]{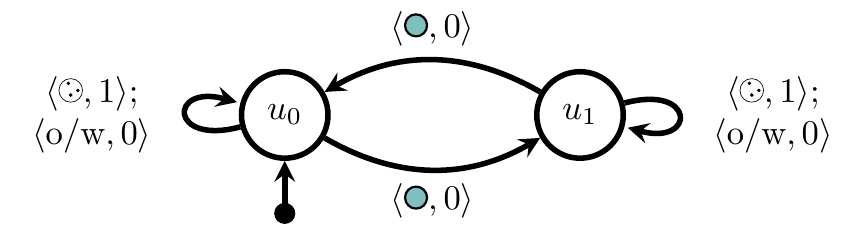}
        \vspace{1mm}
        
        \subcaption{A perfect RM for the gravity domain.}
        \label{fig:grav_rm}
    \end{subfigure}
    \caption{A partially observable environment where the agent
cannot see the external force.}
    \label{fig:domains3}
\end{figure}

Other current limitations include that it is unclear how to handle noise over the high-level detectors $L$ and how to transfer learning from previously learned policies when a new RM is learned. Finally, defining a set of proper high-level detectors for a given environment might be a challenge to deploying LRM. Hence, looking for ways to automate that step is an important direction for future work.

\section{Related Work}

State-of-the-art approaches to partially observable RL use Recurrent Neural Networks (RNNs) as memory in combination with policy gradient \citep[e.g.,][]{mnih2016asynchronous,wang2016sample,schulman2017proximal,jaderberg2016reinforcement} or use external neural-based memories \citep[e.g.,][]{oh2016control,khan2017memory,hung2018optimizing}. Other approaches include extensions to Model-Based Bayesian RL that work under partial observability \citep[e.g.,][]{poupart2008model,doshi2013bayesian,ghavamzadeh2015bayesian} or provide a small binary memory to the agent and a special set of actions to modify it \citep{peshkin1999learning}. While our experiments highlight the merits of our approach with respect to RNN-based approaches, we rely on ideas that are largely orthogonal. As such, there is significant potential in mixing these approaches to get the benefit of memory at both the high- and the low-level.

The effectiveness of automata-based memory has long been recognized in the POMDP literature \citep{cassandra1994acting}, where the objective is to find policies given a complete specification of the environment. 
The idea is to encode policies using \emph{finite state controllers (FSCs)} which are \emph{finite state machines (FSMs)} where the transitions are defined in terms of low-level observations from the environment and each state in the FSM is associated with one primitive action. When interacting with the environment, the agent always selects the action associated with the current state in the controller. \cite{meuleau1999learning} adapted this idea to work in the RL setting by exploiting policy gradient to learn policies encoded as FSCs. RMs can be considered as a generalization of FSC as they allow for transitions using conditions over high-level events and associate complete policies (instead of just one primitive action) to each state. This property allows our approach to easily leverage existing deep RL methods to learn policies from low-level inputs, such as images -- which is not achievable by \cite{meuleau1999learning}. That said, further investigating using ideas for learning FSMs \citep[e.g.,][]{angluin1983inductive,zeng1993learning,giantamidis2016learning,shvo2021interpretable} in learning RMs is a promising direction for future work.

Our approach to learn RMs is greatly influenced by \emph{predictive state representations (PSRs)} \citep{littman2002predictive}. The idea behind PSRs is to find a set of core tests (i.e., sequences of actions and observations) such that if the agent can predict the probabilities of these occurring, given any history $H$, then those probabilities can be used to compute the probability of any other test given $H$. The insight is that state representations that are good for predicting the next observation are good for solving partially observable environments. We adapted this idea to the context of RM learning.

Finally, we note that different approaches to learn RMs were proposed simultaneously, or shortly after, our original publication \citep[e.g.,][]{xu2020joint,xu2020active,furelos2020inductionAAAI,rens2020online,gaon2019reinforcement,memarian2020active,neider2021advice,hasanbeig2019deepsynth}. They all learn reward machines in fully observable domains. Their goal is to learn the smallest RM that is consistent with the reward function -- which makes sense for fully observable domains, but would have limited utility under partial observability (as discussed in Section~\ref{sec:approach}). 

Since they stay in the fully-observable setting, they can use off-the-shelf automata learning approaches to learn the RM. These include methods that learn reward machines using a SAT solver \citep{xu2020joint,neider2021advice}, use inductive logic programming \citep{furelos2020inductionAAAI}, and by using program synthesis \citep{hasanbeig2019deepsynth}. There has also been work on adapting the $L^*$ algorithm \citep{angluin1987learning} to learn RMs given the model of the MDP \citep{rens2020online}, expert demonstrations \citep{memarian2020active}, or in a pure RL setting \citep{gaon2019reinforcement,xu2020active}.

Besides proposing approaches to learn reward machines for fully-observable problems, these works also make additional contributions that may be useful in the context of partial observability. For instance, \cite{furelos2020inductionAAAI} and \cite{hasanbeig2019deepsynth} add a reward shaping procedure to encourage exploration. \cite{xu2020joint} propose a simple mechanism to transfer some of the previously learned Q-value estimates when a new reward machine is learned. \cite{neider2021advice} show how to incorporate domain knowledge when learning a reward machine. Finally, \cite{gaon2019reinforcement} and \cite{xu2020active} allow, in some cases, driving the agent's exploration towards finding \emph{bugs} in the reward machine. Further study into how to use these in the case of partial observability is left as future work.

\section{Concluding Remarks}

We have presented a method for learning reward machines in partially observable environments and demonstrated the effectiveness of doing so to tackle partially observable RL problems that are unsolvable by the state-of-the art deep RL methods A3C, ACER and PPO. Informed by criteria from the POMDP, FSC, and PSR literature, we proposed a set of RM properties that support tackling RL in partially observable environments. We used these properties to formulate RM learning as a discrete optimization problem. We experimented with several optimization methods, finding local search methods to be the most effective. We then combined this RM learning with policy learning for solving partially observable RL problems. Our combined approach outperformed a set of strong LSTM-based approaches on different domains. 

We believe this work represents an important building block for creating RL agents that can solve cognitively challenging partially observable tasks. Not only did our approach solve problems that were unsolvable by A3C, ACER and PPO, but it did so in a relatively small number of training steps. RM learning provided the agent with memory, but more importantly the combination of RM learning and policy learning provided it with discrete reasoning capabilities that operated at a higher level of abstraction, while leveraging deep RL's ability to learn policies from low-level inputs.  This work leaves open many interesting questions relating to abstraction, observability, and properties of the language over which RMs are constructed. We believe that addressing these questions will push the boundary of partially observable RL problems that can be solved.

\section*{Acknowledgements}
We gratefully acknowledge funding from the Natural Sciences and
Engineering Research Council of Canada (NSERC), the Canada CIFAR AI Chairs
Program, Microsoft Research. The first author also acknowledges
funding from ANID (Becas Chile). Resources used in preparing this research
were provided, in part, by the Province of Ontario, the Government of
Canada through CIFAR, and companies sponsoring the Vector Institute for
Artificial Intelligence (\url{vectorinstitute.ai/partners/}). Finally, we
thank the Schwartz Reisman Institute for Technology and Society for
providing a rich multi-disciplinary research environment.

\bibliographystyle{elsarticle-harv}
\bibliography{refs}

\end{document}